\documentclass{article}

\usepackage{microtype}
\usepackage{graphicx}
\usepackage{booktabs}
\usepackage[preprint]{icml2026}
\usepackage{algorithm}
\usepackage{algorithmic}
\usepackage{hyperref}
\usepackage{multirow} 
\usepackage{makecell}
\usepackage{caption}
\usepackage{adjustbox}
\usepackage{amsmath}
\usepackage{amssymb}
\usepackage{mathtools}
\usepackage{amsthm}
\usepackage[capitalize,noabbrev]{cleveref}
\usepackage{enumitem}
\usepackage{subcaption}

\theoremstyle{plain}
\newtheorem{theorem}{Theorem}[section]
\newtheorem{proposition}[theorem]{Proposition}
\newtheorem{lemma}[theorem]{Lemma}

\theoremstyle{definition}

\theoremstyle{remark}

\usepackage[textsize=tiny]{todonotes}

\usepackage{todonotes}
\usepackage{xcolor}

\icmltitlerunning{Submission and Formatting Instructions for ICML 2026}

\begin{document}

\twocolumn[
\icmltitle{Temporal Pair Consistency for Variance-Reduced Flow Matching}

% It is OKAY to include author information, even for blind
% submissions: the style file will automatically remove it for you
% unless you've provided the [accepted] option to the icml2026
% package.

% List of affiliations: The first argument should be a (short)
% identifier you will use later to specify author affiliations
% Academic affiliations should list Department, University, City, Region, Country
% Industry affiliations should list Company, City, Region, Country

% You can specify symbols, otherwise they are numbered in order.
% Ideally, you should not use this facility. Affiliations will be numbered
% in order of appearance and this is the preferred way.
\icmlsetsymbol{equal}{*}

\begin{icmlauthorlist}
\icmlauthor{Chika Maduabuchi}{yyy}
\icmlauthor{Jindong Wang}{yyy}

%\icmlauthor{}{sch}
%\icmlauthor{}{sch}
\end{icmlauthorlist}

\icmlaffiliation{yyy}{William \& Mary}

\icmlcorrespondingauthor{Jindong Wang}{jdw@wm.edu}

% You may provide any keywords that you
% find helpful for describing your paper; these are used to populate
% the "keywords" metadata in the PDF but will not be shown in the document
\icmlkeywords{Machine Learning, ICML}

\vskip 0.3in
]

% this must go after the closing bracket ] following \twocolumn[ ...

% This command actually creates the footnote in the first column
% listing the affiliations and the copyright notice.
% The command takes one argument, which is text to display at the start of the footnote.
% The \icmlEqualContribution command is standard text for equal contribution.
% Remove it (just {}) if you do not need this facility.

%\printAffiliationsAndNotice{}  % leave blank if no need to mention equal contribution
\begin{NoHyper}
\printAffiliationsAndNotice{}% otherwise use the standard text.
\end{NoHyper}
\begin{abstract}
Continuous-time generative models—such as diffusion models, flow matching, and rectified flow—learn time-dependent vector fields but are typically trained with objectives that treat timesteps independently, leading to high estimator variance and inefficient sampling. Prior approaches mitigate this via explicit smoothness penalties, trajectory regularization, or modified probability paths and solvers. We introduce \emph{Temporal Pair Consistency} (TPC), a lightweight variance-reduction principle that instead couples velocity predictions at paired timesteps along the same probability path, operating entirely at the estimator level without modifying the model architecture, probability path, or solver. We provide a theoretical analysis showing that TPC induces a quadratic, trajectory-coupled regularization that provably reduces gradient variance while preserving the underlying flow-matching objective. Instantiated within flow matching, TPC improves sample quality and efficiency across CIFAR-10 and ImageNet at multiple resolutions, achieving lower FID at identical or lower computational cost than prior methods, and extends seamlessly to modern SOTA-style pipelines with noise-augmented training, score-based denoising, and rectified flow.
\end{abstract}

\section{Introduction}
\label{introduction}

\begin{figure}[!t]
    \centering
    \includegraphics[width=\columnwidth]{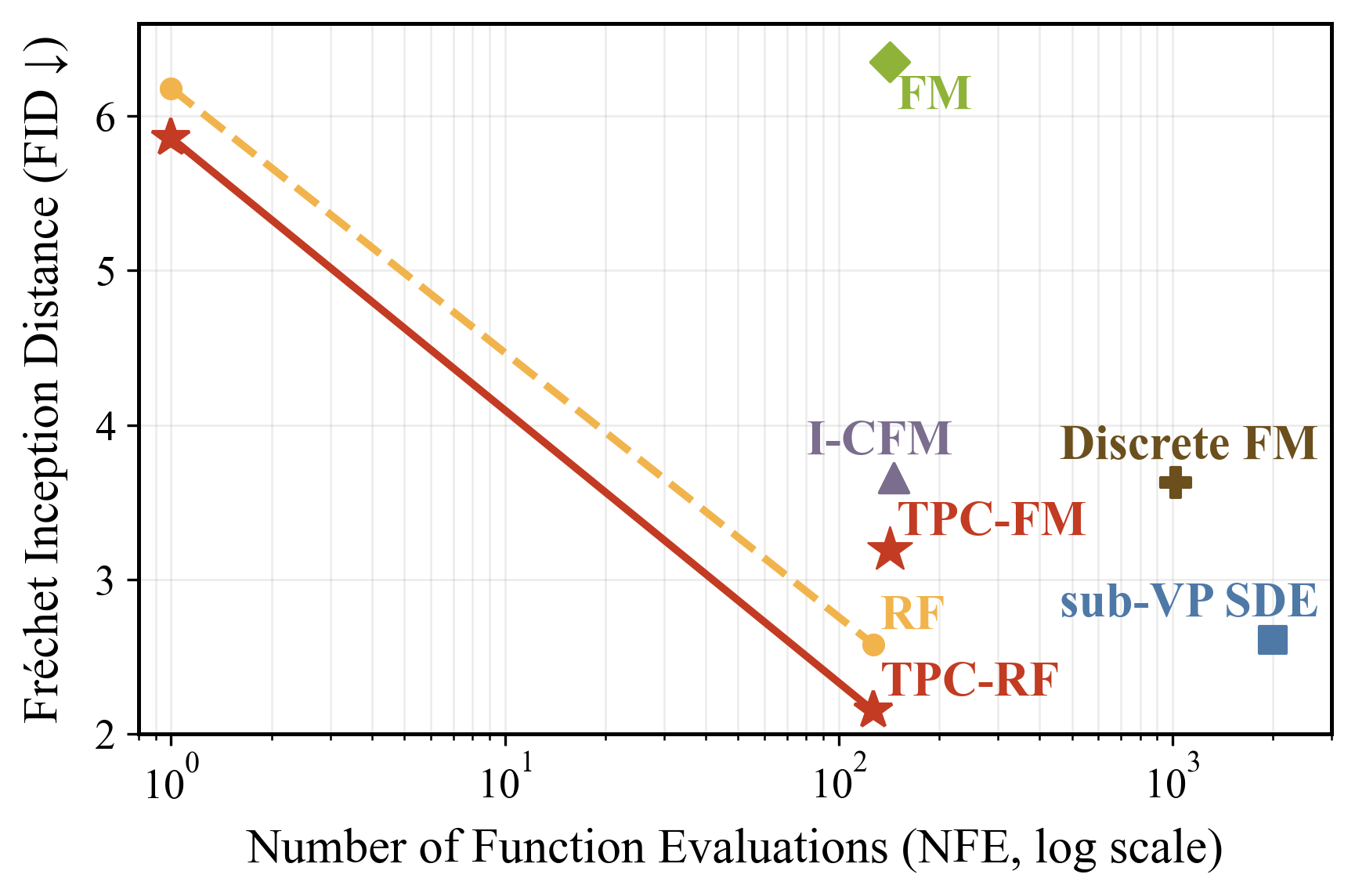}
    \caption{\textbf{Sample quality vs. sampling efficiency}.
Fréchet Inception Distance (FID ↓) versus number of function evaluations (NFE, log scale) on CIFAR-10. Temporal Pair Consistency (TPC) consistently shifts the quality–efficiency frontier
by suppressing temporal oscillations in the learned vector field,
achieving lower FID at identical or lower computational cost without modifying
the underlying model or solver.
    }
    \label{fig:teaser}
\end{figure}

Continuous-time generative models have become a popular framework for high-fidelity image synthesis, encompassing both diffusion models (DMs) and deterministic continuous normalizing flows (NFs).
DMs have achieved state-of-the-art performance across a wide range of benchmarks, benefiting from stable training objectives and well-understood stochastic dynamics~\cite{NEURIPS2020_4c5bcfec,Rombach_2022_CVPR, 10.1145/3626235}, but are often at the cost of long sampling trajectories and substantial computational overhead.
In parallel, recent work has revisited deterministic formulations based on ordinary differential equations, demonstrating that carefully designed probability paths—such as those used in flow matching (FM) \citep{lipman2023flow} and rectified flow \citep{liu2023flow}—can offer greater flexibility and improved sampling efficiency while retaining competitive generation quality~\cite{li2023selfconsistent}.

Despite recent progress, existing flow-matching–based generative models still exhibit important limitations in how temporal dynamics are learned. Standard flow matching trains the velocity field as a function of state and time, but does so independently at each time step, without explicitly constraining predictions across nearby times~\cite{lipman2023flow,liu2023flow}. Intuitively, this independence wastes temporal correlation already present along each probability path: gradients at different timesteps share randomness but are treated as independent noise, inflating estimator variance~\cite{boffi2025flow,albergo2023building}. Prior work has observed that this lack of temporal coherence induces curved trajectories in the marginal flow, which in turn lead to elevated gradient variance during training, as we also empirically observe (Figure~\ref{fig:variance_adv}), and increased numerical error when coarse time discretizations are used at inference~\cite{geng2025mean,lee2025multimarginal,reu2025gradient}. In practice, this manifests as reduced sample efficiency: achieving high-quality samples requires finer discretization or a larger number of function evaluations, even when using identical solvers and probability paths~\cite{10.5555/3692070.3692573}. Figure~\ref{fig:teaser} provides a direct manifestation of this effect: when the vector field is learned independently across time (FM, RF), identical sampling budgets yield substantially worse sample quality, whereas temporal pair consistency achieves markedly lower FID at the same number of function evaluations.

In this work, we introduce \emph{Temporal Pair Consistency Flow Matching} (TPC-FM), a variance-reduced formulation of flow matching that explicitly enforces temporal coherence in the learned velocity field. Prior work on continuous-time generative models has explored temporal regularization through path-length penalties, Jacobian constraints, straight-trajectory objectives, or solver- and architecture-level design choices to improve stability and efficiency~\cite{grathwohl2018scalable,kidger2022neuraldifferentialequations,liu2023flow,ma2025learningstraightflowsvariational,geng2025mean}. While effective, these approaches typically modify the function class, probability path, or inference procedure. TPC instead exploits a structural property of standard flow matching~\cite{lipman2023flow}: during training, velocity predictions at different timesteps along the \emph{same} probability path are learned independently, despite being strongly correlated by construction. Directly enforcing temporal smoothness is nontrivial in this setting, since the target velocity field is defined only implicitly through stochastic path samples rather than an explicit time-continuous objective~\cite{albergo2023building,boffi2025flow}. We show that this independence can be addressed by \emph{pairing} timesteps sampled along the same path and encouraging consistency between their corresponding velocity predictions, yielding a lightweight, self-supervised regularization that operates entirely within the existing flow-matching objective. By coupling stochastic evaluations across time—without altering the probability path, neural architecture, or solver—TPC stabilizes optimization, substantially reduces training-time variance, and improves sample efficiency in practice, complementing recent advances in probability-path design and high-resolution flow-based generation~\cite{10.5555/3692070.3692573,ma2025learningstraightflowsvariational,zhai2025normalizing}.

TPC-FM supports both simple and adaptive mechanisms for constructing temporal pairs.
We show in Figure~\ref{fig:variance_adv} that a fixed antithetic pairing strategy, which symmetrically couples early and late timesteps along the probability path, already yields substantial variance reduction, analogous to classical antithetic sampling in Monte Carlo estimation \citep{pmlr-v97-ren19b}.
To further adapt temporal coupling to the data and model, we additionally introduce a learnable monotone pairing function that discovers effective temporal correspondences while preserving the ordered structure of the path.
Temporal consistency is applied stochastically during training, ensuring that TPC-FM functions as a variance-reduction mechanism rather than a hard constraint.
Because the formulation relies only on paired evaluations of the same probability path, it applies seamlessly to existing FM frameworks and remains compatible with recent deterministic path constructions such as rectified flow~\citep{liu2023flow}.

We evaluate TPC-FM across a spectrum of widely adopted image generation benchmarks that reflect both controlled analysis settings and modern high-resolution evaluation protocols. Specifically, we consider unconditional generation on CIFAR-10~\cite{krizhevsky2009learning} and ImageNet~\cite{NIPS2012_c399862d} at resolutions up to $128\times128$, which remain the standard benchmarks for assessing optimization behavior, sample efficiency, and likelihood–quality trade-offs in continuous-time generative models, including recent diffusion and flow-matching frameworks~\cite{NEURIPS2022_a98846e9,yang2025consistency,ifriqi2025flowceptiontemporallyexpansiveflow,zhai2025normalizing}. To assess practical relevance at modern resolutions, we further evaluate TPC-FM under noise-augmented training with score-based denoising on conditional ImageNet-$64$ and ImageNet-$128$, following the evaluation protocols used by recent state-of-the-art diffusion~\cite{NEURIPS2022_a98846e9} and flow-based models~\cite{zhai2025normalizing}. Across diffusion-based, flow-matching, and rectified-flow formulations, TPC-FM consistently improves sample quality and sampling efficiency without introducing additional architectural complexity or inference cost. In the rectified-flow setting, temporal pair consistency complements trajectory straightening, yielding improved performance under both one-step and full-simulation regimes. Ablation studies further demonstrate that these gains are robust across pairing strategies and hyperparameter choices, confirming temporal coherence as a key factor in stabilizing and accelerating flow-based generative modeling.

\paragraph{Contributions.}
\vspace{-0.5em}
Our contributions are threefold:
\vspace{-0.5em}
\begin{enumerate}[leftmargin=*, label=(\roman*), topsep=0pt, itemsep=0pt, parsep=0pt]
    \item We introduce \emph{Temporal Pair Consistency} (TPC), a general variance-reduction principle for flow matching that enforces temporal coherence in the learned velocity field by coupling stochastic evaluations across time along the same probability path. TPC operates entirely within the standard flow-matching objective, without modifying probability paths, solvers, or model architectures. We provide a theoretical analysis that formalizes TPC as a quadratic, trajectory-coupled regularizer and establishes contraction and variance-reduction guarantees.
    
    \item We present practical instantiations of TPC using both \emph{fixed} and \emph{learnable} timestep pairing mechanisms, and show that temporal coupling can be incorporated without altering the underlying training loss or sampling procedure. We further empirically demonstrate that TPC substantially reduces training-time variance in the learned vector field, by coupling temporally correlated gradients.
    
    \item We demonstrate that TPC consistently improves sample quality and sampling efficiency across multiple continuous-time generative frameworks, including flow matching and rectified flow, on widely adopted image generation benchmarks. In particular, TPC yields gains under both standard probability-flow sampling and modern SOTA-style pipelines with noise-augmented training and score-based denoising.
\end{enumerate}
\section{Related Work}
\label{sec:related_work}

\paragraph{Temporal and trajectory regularization in continuous-time models.}
TPC is related at a high level to prior work on temporal smoothness, consistency regularization, and trajectory regularization in continuous-time generative models. Several approaches explicitly regularize the dynamics of learned ODEs by penalizing path length, kinetic energy, or Jacobians of the velocity field, often to improve numerical stability or invertibility in continuous normalizing flows~\cite{10.5555/3327757.3327764,DBLP:conf/iclr/GrathwohlCBSD19,kidger2022neuraldifferentialequations, yang2025consistency}. Related ideas appear in diffusion models, where smoothness of the score or drift over time is implicitly encouraged through architectural design or discretization choices~\cite{song2021scorebased,NEURIPS2022_a98846e9}, as well as in recent work that explores straight-trajectory learning through latent-augmented or variational formulations of flow matching~\cite{ma2025learningstraightflowsvariational}.

However, there are several important distinctions. First, TPC does not impose an explicit smoothness constraint on the velocity field with respect to time, nor does it regularize Jacobians or higher-order derivatives of the learned dynamics, unlike path- or Jacobian-based regularization methods~\cite{DBLP:conf/iclr/GrathwohlCBSD19,kidger2022neuraldifferentialequations}. Instead, it operates directly on paired stochastic evaluations of the flow-matching objective~\cite{lipman2023flow}, coupling velocity predictions at two timesteps sampled along the same probability path. As a result, TPC primarily targets the variance of the stochastic gradient estimator, rather than imposing a deterministic regularizer on the learned dynamics or function class.

\paragraph{Consistency regularization.}
TPC is also superficially related to consistency regularization techniques widely used in supervised and self-supervised learning, where model predictions are encouraged to be invariant under perturbations or augmentations~\cite{laine2017temporal,NEURIPS2020_06964dce}. Recent work has extended this idea to generative modeling through consistency models, which enforce output-level agreement across noise levels to enable one-step generation~\cite{lu2025simplifying, issenhuth2024improving}. In contrast, TPC does not enforce invariance of model outputs or define a fixed-point mapping across time; instead, it couples stochastic evaluations of the existing flow-matching objective to reduce estimator variance during training, while preserving the underlying continuous-time dynamics.

\paragraph{Relation to flow matching and rectified flow.}
Finally, TPC is complementary to recent advances in designing efficient probability paths for continuous-time generative modeling, including flow matching~\cite{lipman2023flow}, rectified flow~\cite{liu2023flow}, and temporally structured or expansive flow constructions~\cite{ifriqi2025flowceptiontemporallyexpansiveflow}. While these methods focus on the choice of probability paths and the geometry of trajectories, TPC addresses a different aspect: the temporal structure of the training objective itself. Because TPC relies only on paired evaluations of the existing path sampler, it applies seamlessly to different probability paths and formulations, including both flow matching and rectified flow, without modifying the underlying generative model, solver, or sampling procedure. These distinctions position TPC as a lightweight and general mechanism for improving optimization stability and sample efficiency, rather than as an alternative probability path or solver design.

\section{Method}
\label{sec:method}

Flow matching learns a time-indexed vector field that transports a simple
reference distribution into the target data distribution by integrating an
ordinary differential equation. In this section, we introduce \emph{Temporal
Pair Consistency Flow Matching} (TPC-FM), a variance-reduced formulation of
flow matching that enforces temporal coherence between velocity predictions at
paired timesteps. \emph{Importantly}, TPC does not impose an explicit smoothness or Jacobian
penalty on the velocity field; instead, it reduces stochastic gradient variance
by coupling paired evaluations of the existing flow-matching objective. The key idea is to couple training examples across time in
order to stabilize gradient estimates, improve sample efficiency, and reduce
the mismatch between adjacent predictions of the velocity field. We begin by revisiting the flow-matching formulation~\ref{sec:flowmatching_formulation}, then introduce temporal
pairs~\ref{sec:tpcvar}, describe fixed and learnable pairing mechanisms~\ref{sec:pairing_mechanisms}, offer theoretical insights~\ref{sec:theory}, and conclude with the
full training objective~\ref{sec:training_objective}.

\subsection{Conditional Flow Matching}
\label{sec:flowmatching_formulation}

Let $x_0 \sim p_0$ denote a base distribution (e.g., $\mathcal{N}(0,I)$) and $x_1 \sim p_1$ the data distribution. A probability path $\{p_t\}_{t\in[0,1]}$ is induced by a conditional map
\(
x_t = \Phi_t(x_0,x_1),
\)
with associated velocity
\(
u_t(x_t) = \partial_t \Phi_t(x_0,x_1).
\)
Flow matching parameterizes a velocity field $v_\theta(x,t)$ and minimizes
\begin{equation}
\label{eq:fm}
\mathcal{L}_{\mathrm{FM}}(\theta)
=
\mathbb{E}_{t\sim\rho,\,(x_0,x_1)}
\bigl\| v_\theta(x_t,t) - u_t(x_t) \bigr\|_2^2,
\end{equation}
where $\rho$ is typically uniform on $[0,1]$. Eq.~\eqref{eq:fm} performs independent $L^2(p_t)$ regressions across $t$, with no coupling between $v_\theta(\cdot,t)$ and $v_\theta(\cdot,t')$.

\subsection{Temporal Pairing for Variance Reduction}
\label{sec:tpcvar}
Temporal Pair Consistency (TPC) reduces variance in flow matching by coupling stochastic evaluations at two timesteps sampled along the same probability path. 
\textbf{Intuition.} In standard flow matching~\cite{lipman2023flow}, velocity predictions at different timesteps are trained independently, even when they lie on the same probability path and share endpoint randomness. As a result, the corresponding stochastic gradients are strongly correlated but treated as independent noise, leading to unnecessarily high estimator variance (Figure~\ref{fig:variance_adv}). TPC exploits this shared randomness by pairing timesteps and enforcing consistency between their velocity predictions, effectively yielding a control-variate estimator that cancels temporal noise while preserving the original objective.

Formally, for $t\sim\rho$, let $t'=\psi(t)$ and draw a common endpoint pair $(x_0,x_1)$. The induced flow-matching states
\(
(x_t,u_t)=\mathcal{P}(x_0,x_1,t)
\)
and
\(
(x_{t'},u_{t'})=\mathcal{P}(x_0,x_1,t')
\)
yield velocity predictions
\(
v_t=v_\theta(x_t,t)
\)
and
\(
v_{t'}=v_\theta(x_{t'},t').
\)
Standard flow matching minimizes $\mathbb{E}\|v_t-u_t\|_2^2$ independently across $t$. TPC instead augments this objective with a paired estimator
\[
\ell_{\mathrm{TPC}}(t,t')
=
\|v_t-u_t\|_2^2
+
\|v_{t'}-u_{t'}\|_2^2
+
\lambda \|v_t-v_{t'}\|_2^2,
\]
which explicitly couples evaluations across time while preserving the underlying probability path and solver. By increasing correlation between paired gradient estimates through shared endpoint randomness $(x_0,x_1)$, this construction yields strict variance reduction via a control-variate effect and induces early collapse of training variance. Algorithm~\ref{alg:tpc_all} describes practical implementations, including fixed and learned pairing rules and stochastic gating.

\subsection{Pairing Mechanisms}
\label{sec:pairing_mechanisms}

Temporal Pair Consistency is defined by a pairing operator $\psi:[0,1]\to[0,1]$ mapping a primary timestep $t\sim\rho$ to an auxiliary timestep $t'=\psi(t)$. Given shared endpoints $(x_0,x_1)$, pairing induces coupled flow-matching states $(x_t,u_t)=\mathcal{P}(x_0,x_1,t)$ and $(x_{t'},u_{t'})=\mathcal{P}(x_0,x_1,t')$, whose joint evaluation increases correlation between stochastic gradients and enables variance reduction.

\paragraph{Fixed antithetic pairing.}
A canonical choice is the antithetic map $\psi_{\mathrm{fix}}(t)=1-t$, which pairs early and late times along the same probability path. For commonly used symmetric interpolants $\Phi_t$ (e.g., linear or affine paths), the joint law of $(x_t,x_{1-t})$ exhibits time-reversal symmetry~\cite{lipman2023flow}, making $(t,1-t)$ an antithetic pair in the sense of classical Monte Carlo variance reduction~\cite{pmlr-v97-ren19b}. As a result, the paired gradients $g(t,\xi)$ and $g(1-t,\xi)$ tend to be negatively correlated, yielding reduced estimator variance without introducing additional parameters.

\paragraph{Learned monotone pairing.}
To allow adaptive pairings beyond a fixed symmetry, we introduce a learnable map $\phi:[0,1]\to[0,1]$ with $t'=\phi(t)$ and impose the structural constraint $\phi'(t)\ge 0$ to preserve temporal order. We parameterize $\phi$ as
$\phi(t)=\sigma\!\big(\sum_{i=1}^H a_i\,\sigma(t+b_i)+c\big)$,
where $\sigma$ denotes the sigmoid, $a_i=\mathrm{softplus}(\tilde a_i)>0$ ensures nonnegative slope contributions, $b_i\in\mathbb{R}$ control transition locations, and $c\in\mathbb{R}$ sets a global offset. A single hidden layer suffices to approximate any monotone function on $[0,1]$ while keeping $\phi$ low-capacity, consistent with its role as a structural estimator component rather than a predictive model.

Monotonicity is weakly enforced during optimization by penalizing order violations on a grid $\{g_k=k/K\}_{k=1}^K$, via
$r(\phi)=\sum_{k=1}^{K-1}\mathbf{1}\{\phi(g_{k+1})<\phi(g_k)\}$,
where $\mathbf{1}\{\cdot\}$ denotes the indicator function and $K$ is a small constant ($K=32$). This regularizer biases $\phi$ toward monotone solutions while retaining sufficient flexibility for data-dependent pairing. Both $\psi_{\mathrm{fix}}$ and $\phi$ are implemented within the same paired estimator and training loop (Algorithm~\ref{alg:tpc_all}).

\subsection{Temporal Consistency Objective}

Temporal Pair Consistency augments the conditional flow-matching risk by introducing a quadratic coupling between velocity evaluations at paired times along a shared probability path. Let $t\sim\rho$, $t'=\psi(t)$, and $(x_t,u_t),(x_{t'},u_{t'})=\mathcal{P}(x_0,x_1,t),(x_0,x_1,t')$ denote path-sampled states with predictions $v_t=v_\theta(x_t,t)$ and $v_{t'}=v_\theta(x_{t'},t')$. The temporal consistency penalty is defined pointwise as
\(
\ell_{\mathrm{TPC}}(t,t') \coloneqq \|v_t-v_{t'}\|_2^2,
\)
which enforces coherence of $v_\theta$ across paired evaluations sharing the same endpoint randomness $(x_0,x_1)$.

For a single sample, the resulting objective takes the form
\[
\mathcal{L}(\theta)
=
\|v_t-u_t\|_2^2
+
\lambda_{\mathrm{tpc}}\|v_t-v_{t'}\|_2^2
+
\lambda_{\mathrm{mono}}\,r(\phi),
\]
where $\lambda_{\mathrm{tpc}},\lambda_{\mathrm{mono}}\ge 0$. At the population level, this corresponds to minimizing a Tikhonov-regularized risk
\(
\mathbb{E}_{t,(x_0,x_1)}\|v_\theta(x_t,t)-u_t\|_2^2
+
\lambda_{\mathrm{tpc}}\mathbb{E}_{t,t'}\|v_\theta(x_t,t)-v_\theta(x_{t'},t')\|_2^2,
\)
which selects, among near-minimizers of the unregularized flow-matching objective, velocity fields with reduced temporal oscillation along path-coupled states.

From an optimization perspective, the paired term induces correlation between stochastic gradients
\(
g(t,\xi)=\nabla_\theta\|v_t-u_t\|_2^2
\)
and
\(
g(t',\xi)=\nabla_\theta\|v_{t'}-u_{t'}\|_2^2
\)
under shared randomness $\xi=(x_0,x_1)$, yielding a control-variate effect and strict variance reduction. This coupling operates entirely at the estimator level and preserves the underlying conditional flow-matching formulation, as formalized in Section~\ref{sec:theory}.

\paragraph{Stochastic gating.}
To avoid over-regularization and ensure that temporal coupling functions as a variance-reduction mechanism rather than a dominant bias, we apply TPC through a gated estimator. Let $b\sim\mathrm{Bernoulli}(p_{\mathrm{tpc}})$ and define the gated loss
\(
\tilde{\ell}_{\mathrm{TPC}}(t,t') = b\,\ell_{\mathrm{TPC}}(t,t').
\)
The resulting objective satisfies $\mathbb{E}[\tilde{\ell}_{\mathrm{TPC}}]=p_{\mathrm{tpc}}\ell_{\mathrm{TPC}}$ while preserving stochastic exposure to the unregularized flow-matching gradient. This randomized coupling stabilizes optimization, induces early variance collapse, and maintains expressive flexibility of the learned velocity field. Algorithm~\ref{alg:tpc_all} summarizes the complete training procedure.
\subsection{Theoretical Analysis}
\label{sec:theory}
The following analysis (Appendix~\ref{app:theory}) formalizes the intuition (Section~\ref{sec:tpcvar}), showing that temporal pairing induces a control-variate estimator with strictly reduced gradient variance.

\paragraph{TPC as a regularized population objective.}
Let $x_t=\Phi_t(x_0,x_1)$ be the path-sampled state and $u_t(x_0,x_1)$ the conditional FM target. Standard FM minimizes
$\mathcal{R}(v)=\mathbb{E}\|v(x_t,t)-u_t\|_2^2$, whose population minimizer is the conditional expectation (an $L^2$ projection).
TPC augments FM with the coupled penalty $\|v\|_{\mathrm{TPC}}^2=\mathbb{E}\|v(x_t,t)-v(x_{t'},t')\|_2^2$ for paired $t'=\psi(t)$, yielding
$\mathcal{R}_\lambda(v)=\mathcal{R}(v)+\lambda\|v\|_{\mathrm{TPC}}^2$.
This is a Tikhonov-regularized risk that selects, among near-minimal FM solutions, predictors with reduced temporal oscillation along path-coupled states and satisfies the deterministic contraction bound
$\|v_\lambda\|_{\mathrm{TPC}}^2 \le \mathcal{R}(v^\star)/\lambda$ for any $v_\lambda\in\arg\min\mathcal{R}_\lambda$.

\paragraph{Optimization: strict variance reduction via coupling.}
For the per-sample FM gradient $g(t,\xi)=\nabla_\theta\|v_\theta(x_t,t)-u_t\|_2^2$ (with shared randomness $\xi$ generating $(x_0,x_1)$),
temporal pairing induces a control-variate estimator $g(t,\xi)-\alpha g(t',\xi)$.
Under mild regularity that implies nontrivial positive correlation between paired gradients,
the optimal $\alpha^\star$ yields the standard strict reduction
$\mathrm{Var}(g-\alpha^\star g')=\mathrm{Var}(g)(1-\rho^2)$, where $\rho=\mathrm{Corr}(g,g')$.
TPC increases this correlation by explicitly enforcing coherence of $v_\theta$ across paired times, thereby reducing gradient noise. 

\paragraph{Sampling: improved probability--flow numerics.}
Generation integrates the probability--flow ODE $\dot z_t=v_\theta(z_t,t)$.
Classical discretization bounds depend not only on Lipschitzness in $x$ but also on temporal variation of the vector field.
Because $\|v_\theta\|_{\mathrm{TPC}}$ penalizes temporal roughness along states visited by the path sampler (and locally approximates a time-derivative seminorm),
TPC reduces effective time-variation on-trajectory, improving numerical stability at fixed step size (or reducing the required NFE for a target error).

These results certify TPC as a principled mechanism—simultaneously estimator-theoretic (variance reduction), functional-analytic (temporal regularization), and numerical (ODE stability)—that explains the empirical improvements observed for flow matching.

\subsection{Full Training Objective}
\label{sec:training_objective}

The complete training procedure combines conditional flow matching, stochastic temporal pairing, and monotonicity regularization (Algorithm~\ref{alg:tpc_all}). For $t\sim\rho$, $t'=\psi(t)$, and $b\sim\mathrm{Bernoulli}(p_{\mathrm{tpc}})$, the TPC--FM estimator minimizes the joint expectation
\begin{equation}
\label{eq:objective}
\mathbb{E}\Big[
\begin{aligned}
\|v_\theta(x_t,t)-u_t\|_2^2
&+ b\,\lambda_{\mathrm{tpc}}\,
\|v_\theta(x_t,t)-v_\theta(x_{t'},t')\|_2^2 \\
&+ \lambda_{\mathrm{mono}}\,r(\phi)
\end{aligned}
\Big].
\end{equation}
where $(x_t,u_t),(x_{t'},u_{t'})=\mathcal{P}(x_0,x_1,t),(x_0,x_1,t')$ are path-sampled states. The first term recovers standard flow matching, while the remaining terms introduce randomized quadratic coupling across paired timesteps and enforce monotonic structure on the pairing map $\phi$.

Gradients w.r.t.\ $\theta$ arise from both flow-matching and temporal-consistency terms, while $\phi$ is optimized via the paired consistency loss and monotonicity regularization. Because the objective depends only on paired evaluations of the path sampler $\mathcal{P}$, it applies unchanged to deterministic or stochastic, continuous or discretized probability paths.

\begin{algorithm}[!t]
\caption{Flow Matching with Temporal Pair Consistency (TPC)}
\label{alg:tpc_all}
\begin{algorithmic}[1]

\REQUIRE Model $f_\theta$, path sampler $\mathcal{P}$, pairing function $\phi$, 
TPC probability $p_{\mathrm{tpc}}$, weights $\lambda_{\mathrm{tpc}}, \lambda_{\mathrm{mono}}$.

\STATE \textbf{Monotone pairing function:}
\STATE $\phi(t) = \sigma\!\left(\sum_{i=1}^H a_i\,\sigma(t + b_i) + c\right)$,  
where $a_i = \mathrm{softplus}(\tilde{a_i}) > 0$.

\STATE \textbf{Fixed pairing:} $t'_{\mathrm{fixed}} = 1 - t$
\STATE \textbf{Learned pairing:} $t'_{\mathrm{learned}} = \phi(t)$

\FOR{each minibatch $(x_1)$}
    \STATE Sample $x_0$ and $t \sim \mathcal{U}(0,1)$
    \STATE $(x_t, u_t) = \mathcal{P}(x_0, x_1, t)$
    \STATE $v_t = f_\theta(x_t, t)$
    \STATE $\mathcal{L}_{FM} = \| v_t - u_t \|^2$

    \STATE Sample $b \sim \mathrm{Bernoulli}(p_{\mathrm{tpc}})$
    \IF{$b = 1$}
        \IF{fixed pairing}
            \STATE $t' = t'_{\mathrm{fixed}}$
        \ELSE
            \STATE $t' = t'_{\mathrm{learned}}$
        \ENDIF
        \STATE $(x_{t'}, u_{t'}) = \mathcal{P}(x_0, x_1, t')$
        \STATE $v_{t'} = f_\theta(x_{t'}, t')$
        \STATE $\mathcal{L}_{TPC} = \| v_t - v_{t'} \|^2$
    \ELSE
        \STATE $\mathcal{L}_{TPC} = 0$
    \ENDIF

    \STATE Construct grid $g_1, \dots, g_K$
    \STATE $r(\phi) = \sum_{i=1}^{K-1} \mathbf{1}[\phi(g_{i+1}) < \phi(g_i)]$

    \STATE $\mathcal{L} = \mathcal{L}_{FM} 
    + \lambda_{\mathrm{tpc}} \mathcal{L}_{TPC}
    + \lambda_{\mathrm{mono}} r(\phi)$

    \STATE Update $\theta, \phi$ by SGD
\ENDFOR

\STATE \textbf{return} $f_\theta, \phi$

\end{algorithmic}
\end{algorithm}

\section{Experiments}
\label{experiments}
\subsection{Setup}

We evaluate on standard unconditional image generation benchmarks following established protocols in diffusion and continuous-time generative modeling~\cite{NEURIPS2020_4c5bcfec,song2021scorebased,lipman2023flow,liu2023flow}, comparing against state-of-the-art diffusion- and flow-based models reported on these benchmarks~\cite{dhariwal2021diffusion,NEURIPS2022_a98846e9}. Experiments are conducted on CIFAR-10 and ImageNet at resolutions $32{\times}32$, $64{\times}64$, and $128{\times}128$, enabling direct comparison across diffusion, score-based, flow-matching, and rectified-flow methods. Unless otherwise stated, all models are unconditional and share the same U-Net backbone, architectural capacity, and training budget, ensuring that observed differences arise from the learning objective and probability path rather than model size or optimization artifacts. We report negative log-likelihood (NLL) where applicable, Fréchet Inception Distance (FID) for sample quality, and the number of function evaluations (NFE) as a measure of sampling efficiency. For flow-based models, samples are generated by solving the learned generative ODE from noise to data: Flow Matching models use adaptive ODE solvers with absolute and relative tolerances set to $10^{-5}$~\cite{lipman2023flow}, while Rectified Flow models use adaptive Runge--Kutta (RK45) solvers for full simulation and fixed-step Euler solvers for few-step, one-step, and distillation settings~\cite{liu2023flow}. All FID, IS, and recall metrics are computed using standard evaluation protocols over $50$k samples. To isolate the effect of temporal structure, we ablate the temporal prior probability $p_{\mathrm{tpc}}$, transition strength $\lambda_{\mathrm{tpc}}$, and monotonicity regularization weight $\lambda_{\mathrm{mono}}$, considering both fixed and learned pairing mechanisms; hyperparameters are selected on held-out validation sets, fixed across datasets, and evaluated across ODE, SDE, and rectified-flow formulations under both one-step and full-simulation regimes. For ImageNet experiments targeting modern SOTA regimes, we additionally evaluate noise-augmented flow matching with score-based denoising at sampling time, following high-resolution generative pipelines used in prior work~\cite{dhariwal2021diffusion,song2021scorebased,NEURIPS2022_a98846e9}, using identical noise configurations for baseline FM and TPC-FM unless otherwise stated to ensure controlled comparisons.

\subsection{Unconditional Image Generation}

\paragraph{Flow Matching}
\begin{table*}[!t]
\centering
\captionsetup[table]{justification=centering}
\caption{Diffusion and Flow-Matching Results on CIFAR-10, ImageNet 32×32, ImageNet 64×64, and ImageNet 128×128. Blank entries indicate methods that do not report unconditional ImageNet results and are evaluated primarily under guided or SOTA-style training; see Tables~\ref{tab:imagenet128_sota} and~\ref{tab:imagenet64_sota} for matched comparisons.}
\label{tab:flow_matching}

% ========================= LEFT TABLE =========================
\begin{minipage}{0.60\textwidth}
\centering
{\small
\setlength{\tabcolsep}{3pt}%
\begin{adjustbox}{max width=\linewidth}
\begin{tabular}{@{}lccccccccc@{}}
\toprule
& \multicolumn{3}{c}{\textbf{CIFAR-10}}
& \multicolumn{3}{c}{\textbf{ImageNet 32$\times$32}}
& \multicolumn{3}{c}{\textbf{ImageNet 64$\times$64}} \\
\cmidrule(lr){2-4} \cmidrule(lr){5-7} \cmidrule(lr){8-10}
\textbf{Model}
& NLL$\downarrow$ & FID$\downarrow$ & NFE$\downarrow$
& NLL$\downarrow$ & FID$\downarrow$ & NFE$\downarrow$
& NLL$\downarrow$ & FID$\downarrow$ & NFE$\downarrow$ \\
\midrule

DDPM {\cite{NEURIPS2020_4c5bcfec}} & 3.12 & 7.48 & 274 & 3.54 & 6.99 & 262 & 3.32 & 17.36 & 264 \\

Score Matching {\cite{lipman2023flow}} & 3.16 & 19.94 & 242 & 3.56 & 5.68 & 178 & 3.40 & 19.74 & 441 \\

i-DODE {\cite{10.5555/3618408.3620191}}
& 2.56 & 11.20 & 162
& 3.69 & 10.31 & 138
& -- & -- & -- \\

ScoreFlow {\cite{lipman2023flow}} & 3.09 & 20.78 & 428 & 3.55 & 14.14 & 195 & 3.36 & 24.95 & 601 \\

FM w/ Diffusion {\cite{lipman2023flow}}
& 3.10 & 8.06 & 183
& 3.54 & 6.37 & 193
& 3.33 & 16.88 & 187 \\

FM w/ OT {\cite{lipman2023flow}}
& 2.99 & 6.35 & 142
& 3.53 & 5.02 & 122
& 3.31 & 14.45 & 138 \\

I-CFM {\cite{tong2024improving}}
& -- & 3.66 & 146
& -- & -- & --
& -- & -- & -- \\

OT-CFM {\cite{tong2024improving}}
& -- & 3.58 & 134
& -- & -- & --
& -- & -- & -- \\

Discrete FM {\cite{NEURIPS2024_f0d629a7}}
& -- & 3.63 & 1024
& -- & -- & --
& -- & -- & -- \\

V-RFM {\cite{guo2025variational}}
& -- & 3.58 & 1000
& -- & -- & --
& -- & -- & -- \\

\midrule
\textbf{TPC-FM (Ours)}
& \textbf{2.99} & \textbf{3.19} & \textbf{142}
& \textbf{3.53} & \textbf{4.22} & \textbf{122}
& \textbf{3.31} & \textbf{13.14} & \textbf{138} \\
\bottomrule
\end{tabular}
\end{adjustbox}
} % end small
\end{minipage}
\hfill
% ========================= RIGHT TABLE WITH REFERENCES =========================
\begin{minipage}{0.34\textwidth}
\centering
{\small
\setlength{\tabcolsep}{2pt}%
\resizebox{.99\linewidth}{!}{
\begin{tabular}{@{}lcc@{}}
\toprule
\textbf{Model (IN 128 $\times$ 128)} & \textbf{NLL$\downarrow$} & \textbf{FID$\downarrow$} \\
\midrule
\makecell[l]{MGAN {\tiny\cite{hoang2018mgan}}}                 & --   & 58.9 \\
\makecell[l]{PacGAN2 {\tiny\cite{NEURIPS2018_288cc0ff}}}       & --   & 57.5 \\
\makecell[l]{Logo-GAN-AE {\tiny\cite{8578714}}}                & --   & 50.9 \\
\makecell[l]{Self-cond. GAN {\tiny\cite{pmlr-v97-lucic19a}}}   & --   & 41.7 \\
\makecell[l]{Uncond. BigGAN {\tiny\cite{pmlr-v97-lucic19a}}}   & --   & 25.3 \\
\makecell[l]{PGMGAN {\tiny\cite{9577442}}}                     & --   & 21.7 \\
\makecell[l]{FM w/ OT {\tiny\cite{lipman2023flow}}}            & 2.90 & 20.9 \\
\midrule
\textbf{TPC-FM (Ours)}                                               & \textbf{2.90} & \textbf{18.6} \\
\bottomrule
\end{tabular}
}
}
\end{minipage}
\end{table*}

Table~\ref{tab:flow_matching} reports unconditional image generation results on CIFAR-10 and ImageNet at resolutions $32{\times}32$, $64{\times}64$, and $128{\times}128$. Across all datasets and resolutions, TPC-FM consistently improves upon prior flow-matching objectives, achieving lower FID at identical or lower NFE while maintaining competitive likelihoods. On CIFAR-10, TPC-FM reduces FID from $6.35$ (FM w/ OT) to $3.19$ at the same NFE, matching the best reported NLL among flow-based methods. Similar gains are observed on ImageNet $32{\times}32$ and $64{\times}64$, where TPC-FM attains the lowest FID among all compared diffusion and flow-matching models without increased sampling cost. At higher resolution, TPC-FM continues to scale favorably, improving FID on ImageNet $128{\times}128$ from $20.9$ to $18.6$ while matching likelihood performance. Notably, these improvements are achieved without additional sampling steps or architectural changes, indicating that temporal pair consistency improves the learned probability paths rather than relying on increased numerical resolution.

\paragraph{Rectified Flow}

\begin{table}[!t]
\centering
\setlength{\tabcolsep}{4pt}
\caption{Benchmarking TPC on RF, SDE, ODE Methods.}
\label{tab:rf_benchmark}
\resizebox{.42\textwidth}{!}{
\begin{tabular}{lcccc}
\toprule
\textbf{Method} &
\textbf{NFE ($\downarrow$)} &
\textbf{IS ($\uparrow$)} &
\textbf{FID ($\downarrow$)} &
\textbf{Recall ($\uparrow$)} \\
\midrule
\multicolumn{5}{c}{\textit{One-Step Generation + Distil (Euler solver, N=1)}} \\
\midrule

1-Rectified Flow & 1 & 9.08 & 6.18 & 0.45 \\
2-Rectified Flow & 1 & 9.01 & 4.85 & 0.50 \\
3-Rectified Flow & 1 & 8.79 & 5.21 & 0.51 \\
VP ODE & 1 & 8.73 & 16.23 & 0.29 \\
sub-VP ODE & 1 & 8.80 & 14.32 & 0.35 \\

\midrule
\multicolumn{5}{c}{\textit{Full Simulation (Runge--Kutta RK45), Adaptive N}} \\
\midrule
1-Rectified Flow & 127 & 9.60 & 2.58 & 0.57 \\
2-Rectified Flow & 110 & 9.24 & 3.36 & 0.54 \\
3-Rectified Flow & 104 & 9.01 & 3.96 & 0.53 \\
VP ODE & 140 & 9.37 & 3.93 & 0.51 \\
sub-VP ODE & 146 & 9.46 & 3.16 & 0.55 \\
\midrule

\multicolumn{5}{c}{\textit{Full Simulation (Euler solver, N=2000)}} \\
\midrule
VP SDE & 2000 & 9.58 & 2.55 & 0.58 \\
sub-VP SDE & 2000 & 9.56 & 2.61 & 0.58 \\
\midrule
\multicolumn{5}{c}{\textit{One-Step Generation + Distil (Euler solver, N=1)}} \\
\midrule
\textbf{TPC-1RF (ours)} & \textbf{1} & \textbf{9.21} & \textbf{5.86} & \textbf{0.47} \\
\textbf{TPC-2RF (ours)} & \textbf{1} & \textbf{9.14} & \textbf{4.55} & \textbf{0.53} \\
\textbf{TPC-3RF (ours)} & \textbf{1} & \textbf{8.92} & \textbf{4.83} & \textbf{0.53} \\
\midrule
\multicolumn{5}{c}{\textit{Full Simulation (Runge--Kutta RK45), Adaptive N}} \\
\midrule
\textbf{TPC-1RF (ours)} & \textbf{127} & \textbf{9.78} & \textbf{2.15} & \textbf{0.6} \\
\textbf{TPC-2RF (ours)} & \textbf{110} & \textbf{9.42} & \textbf{2.95} & \textbf{0.58} \\
\textbf{TPC-3RF (ours)} & \textbf{104} & \textbf{9.18} & \textbf{3.45} & \textbf{0.55} \\
\bottomrule
\end{tabular}
}
\end{table}

Table~\ref{tab:rf_benchmark} reports results for rectified flow (RF), probability-flow ODEs, and SDE baselines under one-step and full-simulation regimes. Consistent with prior work~\cite{liu2023flow}, rectified flows outperform probability-flow ODEs in the one-step setting and achieve competitive quality–efficiency trade-offs under full simulation. Across all rectification depths, applying temporal pair consistency (TPC-RF) yields consistent improvements in FID and recall at identical NFE, both for single-step generation and adaptive Runge--Kutta simulation. For example, TPC-2RF improves one-step FID from $4.85$ to $4.55$ while increasing recall, and under full simulation reduces FID from $2.58$ to $2.15$ without additional function evaluations. These gains indicate that temporal pair consistency improves the quality of the learned vector field itself, complementing rectification without relying on increased solver depth or numerical resolution.

\subsection{Extending TPC to Modern SOTA-Style Flow}

While the preceding sections demonstrate that TPC improves standard flow matching under probability--flow sampling, this regime alone does not reflect how state-of-the-art generative models are trained and evaluated at modern resolutions. In practice, competitive ImageNet performance relies on \emph{noise-augmented training with score-based denoising at sampling time}, a design shared by modern diffusion and flow-based models~\cite{dhariwal2021diffusion,song2021scorebased,zhai2025normalizing}. As a result, gains under clean probability--flow sampling do not fully characterize practical performance at scale.

We therefore evaluate TPC as an \emph{orthogonal training principle} under noise-augmented flow matching with score-based denoising, preserving the underlying flow-matching objective while matching modern SOTA evaluation protocols~\cite{zhai2025normalizing}. As in prior work, denoising is performed using the model-implied score of the noise-augmented probability flow, ensuring that improvements arise from the learned flow field rather than post-processing heuristics.

We evaluate this setting on conditional ImageNet generation at $64\times64$ and $128\times128$, comparing TPC-FM against diffusion~\cite{dhariwal2021diffusion,nichol2021improved}, GAN~\cite{brock2018large}, normalizing-flow~\cite{zhai2025normalizing}, and flow-matching baselines under identical noise-augmented and denoised protocols (Tables~\ref{tab:imagenet64_sota} and~\ref{tab:imagenet128_sota}).

\paragraph{Noise specification.}
In all SOTA-style evaluations, we adopt additive Gaussian noise during training to match modern high-resolution flow-based pipelines~\cite{song2021scorebased,zhai2025normalizing}. Specifically, states along the probability path are perturbed as $x_t \leftarrow x_t + \varepsilon$, with $\varepsilon \sim \mathcal{N}(0,\sigma^2 I)$. The noise scale $\sigma$ is selected to balance denoising strength and flow fidelity and is chosen \emph{exclusively} to optimize FID under the baseline flow-matching setup. Unless otherwise stated, we use $\sigma=0.05$ for ImageNet-$64$ and $\sigma=0.15$ for ImageNet-$128$. These values are then held fixed when training TPC-FM, with no additional tuning, ensuring that all improvements are attributable to temporal pair consistency rather than noise hyperparameter selection.

\begin{table}[!t]
\centering
\caption{Conditional ImageNet $128 \times 128$ generation performance (FID $\downarrow$).
All methods are evaluated under noise-augmented training with score-based denoising at sampling time.}
\label{tab:imagenet128_sota}

\begin{adjustbox}{max width=\linewidth}
\begin{tabular}{lcc}
\toprule
\textbf{Model} & \textbf{Model Class} & \textbf{FID $\downarrow$} \\
\midrule
ADM-G~\cite{dhariwal2021diffusion} & Diff/FM & 2.97 \\
CDM~\cite{10.5555/3586589.3586636} & Diff/FM & 3.52 \\
Simple Diffusion~\cite{10.5555/3618408.3618945} & Diff/FM & 1.94 \\
RIN~\cite{10.5555/3618408.3619002} & Diff/FM & 2.75 \\
\midrule
BigGAN~\cite{brock2018large} & GAN & 8.70 \\
BigGAN-deep~\cite{brock2018large} & GAN & 5.70 \\
\midrule
TARFLOW ($\sigma = 0.05$)~\cite{zhai2025normalizing} & NF & 5.29 \\
TARFLOW ($\sigma = 0.15$)~\cite{zhai2025normalizing} & NF & 5.03 \\
\midrule
FM + noise + denoising (baseline) & FM & 6.8 \\
\textbf{TPC-FM + noise + denoising (ours)} & \textbf{FM} & \textbf{4.9} \\
\bottomrule
\end{tabular}
\end{adjustbox}

\end{table}

\begin{table}[!t]
\centering
\caption{Conditional ImageNet $64 \times 64$ generation performance (FID $\downarrow$).
All methods are evaluated under noise-augmented training with score-based denoising at sampling time.}
\label{tab:imagenet64_sota}

\begin{adjustbox}{max width=\linewidth}
\begin{tabular}{lcc}
\toprule
\textbf{Model} & \textbf{Model Class} & \textbf{FID $\downarrow$} \\
\midrule
EDM~\cite{NEURIPS2022_a98846e9} & Diff/FM & 1.55 \\
iDDPM~\cite{nichol2021improved} & Diff/FM & 2.92 \\
ADM (dropout)~\cite{dhariwal2021diffusion} & Diff/FM & 2.09 \\
\midrule
IC-GAN~\cite{casanova2021instanceconditioned} & GAN & 6.70 \\
BigGAN~\cite{brock2018large} & GAN & 4.06 \\
\midrule
CD (LPIPS)~\cite{10.5555/3618408.3619743} & CM & 4.70 \\
iCT-deep~\cite{song2024improved} & CM & 3.25 \\
\midrule
TARFLOW [4-1024-8-8-$\mathcal{N}(0,0.05^2)$]~\cite{zhai2025normalizing} & NF & 3.99 \\
TARFLOW [2-768-8-8-$\mathcal{N}(0,0.05^2)$]~\cite{zhai2025normalizing} & NF & 2.90 \\
TARFLOW [2-1024-8-8-$\mathcal{N}(0,0.05^2)$]~\cite{zhai2025normalizing} & NF & 2.66 \\
\midrule
FM + noise + denoising (baseline) & FM & 3.6 \\
\textbf{TPC-FM + noise + denoising (ours)} & \textbf{FM} & \textbf{2.4} \\
\bottomrule
\end{tabular}
\end{adjustbox}

\end{table}

\subsection{Ablation Studies}

Tables~\ref{tab:fixed_tpc}–\ref{tab:mono_ablation} analyze the effects of the temporal prior probability $p_{\mathrm{tpc}}$, transition strength $\lambda_{\mathrm{tpc}}$, and monotonicity regularization $\lambda_{\mathrm{mono}}$ on CIFAR-10. Across all settings, TPC-FM consistently improves over the FM w/ OT baseline (FID $6.35$), demonstrating robustness to hyperparameter choice. Moderate temporal coupling yields the best performance, while overly large $\lambda_{\mathrm{tpc}}$ degrades quality, indicating that excessive temporal constraints can restrict the learned probability path. Learning $p_{\mathrm{tpc}}$ further improves performance over fixed priors, achieving the best overall result at $p_{\mathrm{tpc}}=0.75$, $\lambda_{\mathrm{tpc}}=0.10$ (FID $3.19$). Introducing a weak monotonicity regularizer consistently improves sample quality, with small $\lambda_{\mathrm{mono}}$ values yielding the lowest FID, while stronger regularization degrades performance. These results indicate that temporal pair consistency is effective under mild coupling and regularization, with learned pairing providing additional gains; representative qualitative samples are shown in Figure~\ref{fig:imagenet64_qualitative_samples}.

\begin{table}[!t]
\centering
\caption{Fixed TPC-FM FID results across temporal prior probability 
$p_{\mathrm{tpc}}$ and transition strength $\lambda_{\mathrm{tpc}}$.}
\label{tab:fixed_tpc}
\resizebox{.35\textwidth}{!}{
\begin{tabular}{c|cccc|c}
\toprule
$p_{\mathrm{tpc}}$ & \multicolumn{4}{c|}{$\lambda_{\mathrm{tpc}}$} & Best \\
\cmidrule(lr){2-5}
& 0.1 & 0.25 & 0.5 & 1.0 & \\
\midrule
0.25 & 3.895 & 3.833 & 4.104 & 4.314 & 3.833 \\
0.50 & 5.101 & 4.794 & 4.756 & 4.216 & 4.216 \\
0.75 & 4.089 & 4.101 & 4.075 & 3.931 & 3.931 \\
1.00 & 4.309 & \textbf{3.594} & 4.179 & 6.545 & \textbf{3.594} \\
\midrule
\multicolumn{5}{r|}{Baseline} & 6.35 \\
\bottomrule
\end{tabular}
}
\end{table}

\begin{table}[t]
\centering
\caption{Learned TPC-FM FID results across temporal prior probability 
$p_{\mathrm{tpc}}$ and transition strength $\lambda_{\mathrm{tpc}}$.}
\label{tab:learned_tpc}
\resizebox{.35\textwidth}{!}{
\begin{tabular}{c|cccc|c}
\toprule
$p_{\mathrm{tpc}}$ & \multicolumn{4}{c|}{$\lambda_{\mathrm{tpc}}$} & Best \\
\cmidrule(lr){2-5}
& 0.1 & 0.25 & 0.5 & 1.0 & \\
\midrule
0.25 & 3.922 & 4.249 & 4.767 & 4.513 & 3.922 \\
0.50 & 3.895 & 3.885 & 4.266 & 4.723 & 3.885 \\
0.75 & \textbf{3.193} & 3.620 & 4.016 & 3.898 & \textbf{3.193} \\
1.00 & 4.402 & 4.382 & 4.914 & 6.223 & 4.382 \\
\midrule
\multicolumn{5}{r|}{Baseline} & 6.35 \\
\bottomrule
\end{tabular}
}
\end{table}

\begin{table}[t]
\centering
\caption{Effect of $\lambda_{\text{mono}}$ on FID$\downarrow$ for learned and fixed TPC.}
\label{tab:mono_ablation}
\vspace{0.15cm}
\resizebox{.45\textwidth}{!}{
\begin{tabular}{ccccccc}
\toprule
$\boldsymbol{\lambda_{\mathrm{mono}}}$ 
& 0 
& 0.001 
& 0.005 
& 0.010 
& 0.020 
& 0.050 \\
\midrule
\multicolumn{7}{l}{\textbf{Learned}} \\
FID$\downarrow$
& 3.563 
& \textbf{3.193} 
& 4.124 
& 3.640 
& 3.406 
& 3.304 \\
\midrule
\multicolumn{7}{l}{\textbf{Fixed}} \\
FID$\downarrow$
& 4.012
& \textbf{3.594}
& 4.671
& 4.118
& 3.882
& 3.741 \\
\bottomrule
\end{tabular}
}
\end{table}

\begin{figure}[!t]
    \centering
    \includegraphics[width=\columnwidth]{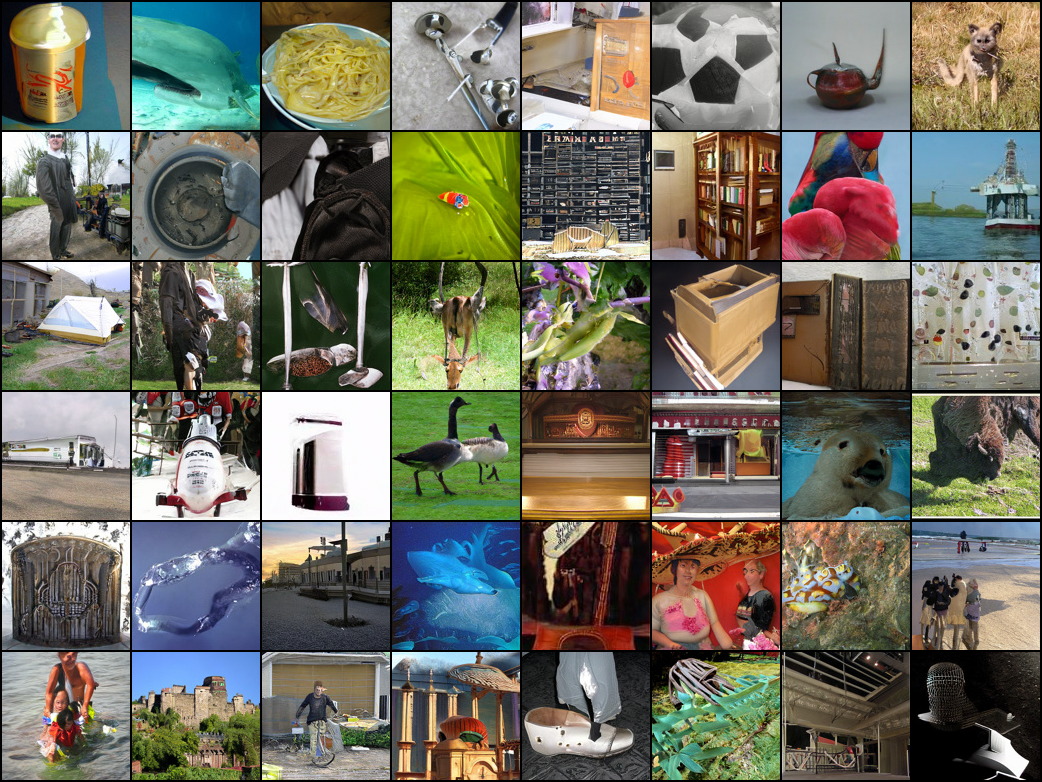}
    \caption{ImageNet Qualitative Samples.
    }
    \label{fig:imagenet64_qualitative_samples}
\end{figure}

\section{Conclusion}

We introduced Temporal Pair Consistency (TPC), a lightweight variance-reduction principle with theoretical guarantees that enforces coherence between velocity predictions at paired timesteps along the same probability path. Applied to flow matching and rectified flow, TPC consistently improves sample quality and efficiency across one-step and full-simulation regimes and extends seamlessly to modern SOTA-style pipelines, demonstrating that simple temporal coupling can replace more complex path or solver designs.

% We introduced Temporal Pair Consistency (TPC), a lightweight and general variance-reduction principle for continuous-time generative modeling, with theoretical guarantees that formalize temporal pairing as a control-variate estimator. By enforcing coherence between velocity predictions at paired timesteps along the same probability path, TPC directly targets a fundamental source of gradient variance in flow matching while preserving the underlying objective, model architecture, and inference procedure. Applied to both flow matching and rectified flow, TPC consistently improves sample quality and sampling efficiency across one-step and full-simulation regimes, and remains effective when integrated into modern SOTA-style pipelines with noise-augmented training and score-based denoising. Overall, our results highlight temporal structure in the training objective as a key lever for efficient continuous-time generative modeling.

\paragraph{Limitations}
This work focuses on unconditional image generation at resolutions up to $128\times128$. While we show that TPC remains effective under modern SOTA-style training pipelines, extending it to conditional settings, higher resolutions, and other modalities is left to future work.

\section*{Acknowledgements}

This work was partially supported by the Commonwealth Cyber Initiative (CCI) program (H-2Q25-020), the William \& Mary Faculty Research Award, the Modal Academic Compute Award, and computational resources provided by the NSF ACCESS program under allocations CIS250827 and CIS251183, with support from the NCSA Delta and DeltaAI systems.

\section*{Impact Statement}

This paper presents work whose goal is to advance the field of 
Machine Learning. There are many potential societal consequences 
of our work, none which we feel must be specifically highlighted here.

% In the unusual situation where you want a paper to appear in the
% references without citing it in the main text, use \nocite

% For ICML use:
% \bibliography{example_paper}
% \bibliographystyle{icml2026}

% For arXiv use:

%%%%%%%%%%%%%%%%%%%%%%%%%%%%%%%%%%%%%%%%%%%%%%%%%%%%%%%%%%%%%%%%%%%%%%%%%%%%%%%
%%%%%%%%%%%%%%%%%%%%%%%%%%%%%%%%%%%%%%%%%%%%%%%%%%%%%%%%%%%%%%%%%%%%%%%%%%%%%%%
% APPENDIX
%%%%%%%%%%%%%%%%%%%%%%%%%%%%%%%%%%%%%%%%%%%%%%%%%%%%%%%%%%%%%%%%%%%%%%%%%%%%%%%
%%%%%%%%%%%%%%%%%%%%%%%%%%%%%%%%%%%%%%%%%%%%%%%%%%%%%%%%%%%%%%%%%%%%%%%%%%%%%%%
\newpage
\appendix
\onecolumn
\section{Theoretical Analysis}
\label{app:theory}

\subsection{Preliminaries}

Let $p_0,p_1$ be distributions on $\mathbb{R}^d$. A probability path $\{p_t\}_{t\in[0,1]}$ is \emph{admissible} if there exists a measurable vector field
$u^\star:\mathbb{R}^d\times[0,1]\to\mathbb{R}^d$ such that $(p_t,u_t^\star)$ solves the continuity equation
\begin{equation}
\label{eq:cont}
\partial_t p_t(x) + \nabla\cdot(p_t(x)u_t^\star(x)) = 0,\qquad p_{t=0}=p_0,\quad p_{t=1}=p_1,
\end{equation}
in the weak sense: for all $\varphi\in C_c^\infty(\mathbb{R}^d)$,
\[
\frac{d}{dt}\int \varphi(x)p_t(x)dx = \int \langle \nabla \varphi(x), u_t^\star(x)\rangle p_t(x)dx.
\]

Fix a path sampler $\mathcal{P}$ inducing a joint law on $(x_0,x_1,t)$ with $t\sim \rho$ on $[0,1]$ and $(x_0,x_1)\sim \pi$,
together with a measurable map $\Phi:[0,1]\times\mathbb{R}^d\times\mathbb{R}^d\to\mathbb{R}^d$ such that
\[
x_t = \Phi_t(x_0,x_1),\qquad (x_0,x_1,t)\sim \pi\otimes \rho.
\]
Let $u_t(x_0,x_1)\in\mathbb{R}^d$ denote a conditional target velocity (depending on the sampler).
For a measurable predictor $v:\mathbb{R}^d\times[0,1]\to\mathbb{R}^d$ define the FM population risk
\begin{equation}
\label{eq:Rdef}
\mathcal{R}(v) := \mathbb{E}\,\|v(x_t,t)-u_t(x_0,x_1)\|_2^2.
\end{equation}

\begin{lemma}[$L^2$ projection / regression form]
\label{lem:proj}
Let $\mathcal{G}_t := \sigma(x_t,t)$ and assume $\mathbb{E}\|u_t\|_2^2<\infty$.
Then
\[
v^\star(x,t) := \mathbb{E}\big[u_t(x_0,x_1)\mid x_t=x,\ t\big]
\]
is the (a.s.) unique minimizer of $\mathcal{R}(v)$ over all $\mathcal{G}_t$-measurable $v$, and
\begin{equation}
\label{eq:proj_pythag}
\mathcal{R}(v) - \mathcal{R}(v^\star) = \mathbb{E}\,\|v(x_t,t)-v^\star(x_t,t)\|_2^2.
\end{equation}
\end{lemma}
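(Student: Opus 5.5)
The argument is the usual orthogonal decomposition of $L^2$ with respect to the sub-$\sigma$-algebra $\mathcal{G}_t=\sigma(x_t,t)$. First, the hypothesis $\mathbb{E}\|u_t\|_2^2<\infty$ guarantees that the conditional expectation $v^\star(x_t,t)=\mathbb{E}[u_t\mid \mathcal{G}_t]$ exists and lies in $L^2$. By the Doob--Dynkin lemma it can be written as a measurable function of $(x_t,t)$, so $v^\star$ is an admissible predictor. Conditional Jensen gives $\mathbb{E}\|v^\star(x_t,t)\|_2^2\le \mathbb{E}\|u_t\|_2^2<\infty$, hence $\mathcal{R}(v^\star)<\infty$.

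Next, fix any $\mathcal{G}_t$-measurable $v$. If $\mathbb{E}\|v(x_t,t)\|_2^2=\infty$, then $\mathcal{R}(v)=\infty$ by the triangle inequality in $L^2$, and \eqref{eq:proj_pythag} holds trivially with both sides infinite. So we may assume $v(x_t,t)\in L^2$ and expand
\[
\|v-u_t\|_2^2=\|v-v^\star\|_2^2+\|v^\star-u_t\|_2^2+2\langle v-v^\star,\ v^\star-u_t\rangle,
\]
where $v,v^\star$ are evaluated at $(x_t,t)$. The key step is to show the cross term has zero expectation. The factor $w:=v-v^\star$ is $\mathcal{G}_t$-measurable and in $L^2$, and $\langle w, v^\star-u_t\rangle$ is integrable by Cauchy--Schwarz. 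The tower property therefore gives
\[
\mathbb{E}\langle w,v^\star-u_t\rangle=\mathbb{E}\big\langle w,\ \mathbb{E}[v^\star-u_t\mid\mathcal{G}_t]\big\rangle=0,
\]
by the defining property of $v^\star$. Taking expectations in the expansion yields \eqref{eq:proj_pythag}.

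Minimality and uniqueness follow immediately. The right side of \eqref{eq:proj_pythag} is nonnegative, so $v^\star$ minimizes $\mathcal{R}$. If some $v$ also attains the minimum, then $\mathbb{E}\|v-v^\star\|_2^2=0$, so $v(x_t,t)=v^\star(x_t,t)$ almost surely, which is uniqueness in the a.s. sense. The only delicate point is the integrability bookkeeping: the cross term must be justified as integrable before the tower property is applied, and the infinite-risk case must be handled separately. Both are covered above, so I expect no substantive obstacle.
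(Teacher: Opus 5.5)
Your proposal is correct and follows the paper's proof: expand $\|v-u_t\|_2^2$ around $v^\star$, kill the cross term by conditioning on $\mathcal{G}_t$, take expectations, and read off minimality and a.s.\ uniqueness from the nonnegativity of the right-hand side. Your extra bookkeeping fills in details the paper's one-line proof leaves implicit: Doob--Dynkin measurability, integrability of the cross term via Cauchy--Schwarz before using the tower property, and the separate infinite-risk case.
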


\begin{proof}
Expand
\[
\|v-u\|^2 = \|v-v^\star\|^2 + \|v^\star-u\|^2 + 2\langle v-v^\star,\, v^\star-u\rangle,
\]
take conditional expectation given $\mathcal{G}_t$ and use $\mathbb{E}[u\mid\mathcal{G}_t]=v^\star$ to kill the cross term; then average.
\end{proof}

Lemma~\ref{lem:proj} shows that conditional flow matching is a \emph{pointwise-in-time} regression problem:
for each $t$, the predictor $v(\cdot,t)$ is fit independently as an $L^2$ projection.
Crucially, the objective~\eqref{eq:Rdef} imposes \emph{no coupling across time},
so the learned vector field may exhibit arbitrary temporal oscillations even when achieving minimal population risk.
The remainder of this appendix studies how introducing temporal coupling alters this geometry.

\subsection{TPC-FM as quadratic regularization in a trajectory-coupled Hilbert space}

Rather than penalizing abstract smoothness in $t$, we regularize \emph{discrepancies of predictions evaluated along the same sampled trajectory}.
This enforces temporal coherence only on states that are jointly realizable under the path sampler,
which is the relevant geometry for both optimization and sampling.
Fix a pairing rule $\psi:[0,1]\to[0,1]$ (possibly randomized; treat the randomness as absorbed into $\rho$).
Let $t'=\psi(t)$ and define paired states $(x_t,x_{t'})=(\Phi_t(x_0,x_1),\Phi_{t'}(x_0,x_1))$.
Define the quadratic form
\begin{equation}
\label{eq:TPCnorm}
\|v\|_{\mathrm{TPC}}^2 := \mathbb{E}\,\|v(x_t,t)-v(x_{t'},t')\|_2^2.
\end{equation}
Define the TPC-regularized population objective
\begin{equation}
\label{eq:Rlam}
\mathcal{R}_\lambda(v) := \mathcal{R}(v) + \lambda \|v\|_{\mathrm{TPC}}^2,\qquad \lambda>0.
\end{equation}

Let $\mu$ denote the path marginal on $(x,t)$: $d\mu(x,t)=p_t(x)\rho(t)\,dx\,dt$ and define the Hilbert space
\[
\mathcal{H}:=L^2(\mu;\mathbb{R}^d),\qquad \langle f,g\rangle_{\mathcal{H}}=\int_0^1\!\!\int \langle f(x,t),g(x,t)\rangle\,p_t(x)\rho(t)\,dx\,dt.
\]
Define an operator $\mathcal{A}_\psi$ on functions $v$ by
\[
(\mathcal{A}_\psi v)(x_0,x_1,t) := v(\Phi_t(x_0,x_1),t)-v(\Phi_{\psi(t)}(x_0,x_1),\psi(t)).
\]
Then
\begin{equation}
\label{eq:operator_form}
\|v\|_{\mathrm{TPC}}^2 = \mathbb{E}\,\|\mathcal{A}_\psi v\|_2^2,
\end{equation}
i.e., $\|\cdot\|_{\mathrm{TPC}}$ is the $L^2(\pi\otimes\rho)$-seminorm of $\mathcal{A}_\psi v$.

\begin{lemma}[Tikhonov selection inequality]
\label{lem:tikh}
Let $v^\star\in\arg\min \mathcal{R}(v)$ and $v_\lambda\in\arg\min \mathcal{R}_\lambda(v)$.
Then
\begin{equation}
\label{eq:tikh_ineq}
\lambda \|v_\lambda\|_{\mathrm{TPC}}^2 \le \mathcal{R}(v^\star)-\mathcal{R}(v_\lambda)\le \mathcal{R}(v^\star),
\qquad
\Rightarrow\qquad
\|v_\lambda\|_{\mathrm{TPC}}^2 \le \frac{\mathcal{R}(v^\star)}{\lambda}.
\end{equation}
Moreover, if $\mathcal{R}(v_\lambda)\le \mathcal{R}(v^\star)+\varepsilon$, then
\begin{equation}
\label{eq:selection}
\|v_\lambda\|_{\mathrm{TPC}}^2 \le \inf_{v:\,\mathcal{R}(v)\le \mathcal{R}(v^\star)+\varepsilon}\|v\|_{\mathrm{TPC}}^2 + \frac{\varepsilon}{\lambda}.
\end{equation}
\end{lemma}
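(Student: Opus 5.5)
The whole argument is a comparison of objective values: $v_\lambda$ minimizes $\mathcal{R}_\lambda$, so $\mathcal{R}_\lambda(v_\lambda)\le \mathcal{R}_\lambda(w)$ for every admissible competitor $w$. I will also use $\mathcal{R}(v_\lambda)\ge \mathcal{R}(v^\star)$, since $v^\star$ minimizes $\mathcal{R}$ (Lemma~\ref{lem:proj}). Rearranging with these two facts gives each claim.

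\textbf{Selection bound \eqref{eq:selection}.} I would prove this first, because it is the cleanest. Take any $v$ with $\mathcal{R}(v)\le\mathcal{R}(v^\star)+\varepsilon$. Optimality of $v_\lambda$ gives
\[
\mathcal{R}(v_\lambda)+\lambda\|v_\lambda\|_{\mathrm{TPC}}^2\le \mathcal{R}(v)+\lambda\|v\|_{\mathrm{TPC}}^2 .
\]
Hence
\[
\lambda\|v_\lambda\|_{\mathrm{TPC}}^2\le \mathcal{R}(v)-\mathcal{R}(v_\lambda)+\lambda\|v\|_{\mathrm{TPC}}^2 .
\]
Bounding $\mathcal{R}(v)-\mathcal{R}(v_\lambda)\le \mathcal{R}(v^\star)+\varepsilon-\mathcal{R}(v^\star)=\varepsilon$, dividing by $\lambda$, and taking the infimum over such $v$ gives \eqref{eq:selection}. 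This step does not actually need the hypothesis $\mathcal{R}(v_\lambda)\le\mathcal{R}(v^\star)+\varepsilon$.

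\textbf{Contraction bound \eqref{eq:tikh_ineq}.} Taking $w=v^\star$ gives
\[
\lambda\|v_\lambda\|_{\mathrm{TPC}}^2\le \mathcal{R}(v^\star)-\mathcal{R}(v_\lambda)+\lambda\|v^\star\|_{\mathrm{TPC}}^2 .
\]
The right inequality in \eqref{eq:tikh_ineq} then follows from $\mathcal{R}(v_\lambda)\ge 0$.

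\textbf{Main obstacle.} The term $\lambda\|v^\star\|_{\mathrm{TPC}}^2$ does not vanish in general, so the first inequality in \eqref{eq:tikh_ineq} holds only under an extra assumption. In fact $\lambda\|v_\lambda\|_{\mathrm{TPC}}^2\le \mathcal{R}(v^\star)-\mathcal{R}(v_\lambda)$ would give $\mathcal{R}(v_\lambda)\le \mathcal{R}(v^\star)$, which combined with minimality forces $\mathcal{R}(v_\lambda)=\mathcal{R}(v^\star)$ and $\|v_\lambda\|_{\mathrm{TPC}}=0$. I would therefore handle this in one of two ways.
\begin{enumerate}
\item[(a)] State it under the hypothesis $\|v^\star\|_{\mathrm{TPC}}=0$, for example when $v^\star$ is constant along paired states.
\item[(b)] Record the corrected form
\[
\|v_\lambda\|_{\mathrm{TPC}}^2\le \frac{\mathcal{R}(v^\star)-\mathcal{R}(v_\lambda)}{\lambda}+\|v^\star\|_{\mathrm{TPC}}^2 .
\]
\end{enumerate}
For an unconditional bound of the stated shape, I would compare instead with $w\equiv 0$, which has zero TPC seminorm. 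This gives
\[
\|v_\lambda\|_{\mathrm{TPC}}^2\le \frac{\mathbb{E}\|u_t\|_2^2-\mathcal{R}(v_\lambda)}{\lambda}\le \frac{\mathbb{E}\|u_t\|_2^2}{\lambda},
\]
which matches \eqref{eq:tikh_ineq} with $\mathcal{R}(v^\star)$ replaced by $\mathcal{R}(0)$.

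Finally, I would check that $v_\lambda$ exists. The functional $\mathcal{R}_\lambda$ is a convex, continuous quadratic on $\mathcal{H}$, and $\|\cdot\|_{\mathrm{TPC}}^2$ is bounded on $\mathcal{H}$ through \eqref{eq:operator_form}, so minimizers exist when the relevant marginals are absolutely continuous. If existence fails, the same comparisons work for near-minimizers up to an additive slack.
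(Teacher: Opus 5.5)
Your proposal is correct and follows the paper's approach: compare $\mathcal{R}_\lambda(v_\lambda)$ with $\mathcal{R}_\lambda$ of a competitor, then use $\mathcal{R}(v_\lambda)\ge\mathcal{R}(v^\star)$. For \eqref{eq:selection}, your argument is what the paper's one-line hint intends, and you are right that the hypothesis $\mathcal{R}(v_\lambda)\le\mathcal{R}(v^\star)+\varepsilon$ is never used. For \eqref{eq:tikh_ineq}, you have found a real error. The paper writes $\mathcal{R}(v_\lambda)+\lambda\|v_\lambda\|_{\mathrm{TPC}}^2\le\mathcal{R}(v^\star)+\lambda\|v^\star\|_{\mathrm{TPC}}^2$ and then ``drops $\lambda\|v^\star\|_{\mathrm{TPC}}^2\ge0$''. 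That term sits on the larger side, so discarding it is not allowed. As you note, the first inequality forces $\mathcal{R}(v_\lambda)=\mathcal{R}(v^\star)$ and $\|v_\lambda\|_{\mathrm{TPC}}=0$. So it can hold only when some FM minimizer already has zero TPC seminorm, i.e.\ when the regularizer is inactive.

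Even the final consequence $\|v_\lambda\|_{\mathrm{TPC}}^2\le\mathcal{R}(v^\star)/\lambda$, which is reused in Theorem~\ref{thm:anchor_dense}(i), is false in general. So your replacements are necessary, not cosmetic. Here is a counterexample:
\begin{itemize}
\item Take $x_0\sim\mathcal{N}(0,I)$, $x_1\equiv a$, $x_t=\cos(\tfrac{\pi t}{2})x_0+\sin(\tfrac{\pi t}{2})a$, $\rho$ uniform and $\psi(t)=1-t$.
\item Since $x_t$ determines $x_0$ for $t<1$, we get $v^\star(x_t,t)=u_t$ a.s.\ and $\mathcal{R}(v^\star)=0$.
\item On the other hand, $u_t-u_{1-t}=\tfrac{\pi}{2}\bigl(\cos\tfrac{\pi t}{2}-\sin\tfrac{\pi t}{2}\bigr)(x_0+a)$, so $\|v^\star\|_{\mathrm{TPC}}>0$.
\item Suppose $\|v_\lambda\|_{\mathrm{TPC}}=0$. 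Along $w_s=v_\lambda+s(v^\star-v_\lambda)$ one has $\mathcal{R}_\lambda(w_s)=(1-s)^2\,\mathbb{E}\|(v_\lambda-v^\star)(x_t,t)\|_2^2+\lambda s^2\|v^\star\|_{\mathrm{TPC}}^2$.
\item This decreases for small $s>0$ unless $v_\lambda=v^\star$ $\mu$-a.e. That is impossible here: $(x_{1-t},1-t)\sim\mu$, so a.e.\ equality would force equal TPC seminorms.
\item Hence $\|v_\lambda\|_{\mathrm{TPC}}>0=\mathcal{R}(v^\star)/\lambda$.
\end{itemize}

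Some refinements of your corrected bounds:
\begin{itemize}
\item Combining the $w\equiv0$ comparison with $\mathcal{R}(v_\lambda)\ge\mathcal{R}(v^\star)$ and Lemma~\ref{lem:proj} sharpens it to $\|v_\lambda\|_{\mathrm{TPC}}^2\le\mathbb{E}\|v^\star(x_t,t)\|_2^2/\lambda$.
\item More generally, $\lambda\|v_\lambda\|_{\mathrm{TPC}}^2\le\mathbb{E}\|(w-v^\star)(x_t,t)\|_2^2$ for any $w$ with $\|w\|_{\mathrm{TPC}}=0$.
\item Your form (b) immediately gives $\|v_\lambda\|_{\mathrm{TPC}}\le\|v^\star\|_{\mathrm{TPC}}$.
\end{itemize}
In your existence remark, boundedness of $\|\cdot\|_{\mathrm{TPC}}$ on $\mathcal{H}$ needs the law of $(x_{\psi(t)},\psi(t))$ to have a bounded density with respect to $\mu$, not merely absolute continuity. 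Since the lemma presupposes $v_\lambda$ exists, this point is not needed anyway.
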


\begin{proof}
Optimality gives
$\mathcal{R}(v_\lambda)+\lambda\|v_\lambda\|_{\mathrm{TPC}}^2\le \mathcal{R}(v^\star)+\lambda\|v^\star\|_{\mathrm{TPC}}^2$.
Drop $\lambda\|v^\star\|_{\mathrm{TPC}}^2\ge 0$ to get \eqref{eq:tikh_ineq}. For \eqref{eq:selection}, compare against any $v$ with $\mathcal{R}(v)\le \mathcal{R}(v^\star)+\varepsilon$.
\end{proof}

Lemma~\ref{lem:tikh} formalizes TPC as a \emph{selection principle}:
among all predictors achieving near-minimal FM risk,
the regularized objective prefers those with smaller temporal variation along coupled trajectories.
This is the precise sense in which TPC reduces temporal oscillation at the population level.

\subsection{Anchor theorem: population regularization \texorpdfstring{$\Rightarrow$}{=>} correlated gradients \texorpdfstring{$\Rightarrow$}{=>} strict variance reduction}

Let $v_\theta$ be a parametric model. Define the per-sample FM loss and gradient
\begin{equation}
\label{eq:ell_grad}
\ell(\theta;t,\xi):=\|v_\theta(x_t,t)-u_t(\xi)\|_2^2,\qquad
g(t,\xi):=\nabla_\theta \ell(\theta;t,\xi),
\end{equation}
where $\xi$ denotes the shared randomness producing $(x_0,x_1)$ and any sampler noise, so that
$x_t=\Phi_t(\xi)$ and $u_t=u_t(\xi)$.
Define the paired time $t'=\psi(t)$ and $h(t,\xi):=g(t',\xi)$.

We now connect the regularized population view to stochastic optimization.
Although TPC modifies the objective, its most immediate algorithmic effect is to
\emph{correlate gradient evaluations across time}, enabling classical variance-reduction mechanisms.
\begin{lemma}[Optimal scalar control variate]
\label{lem:cv}
Let $G,H$ be square-integrable random vectors in $\mathbb{R}^m$ and consider $\widehat G_\alpha := G-\alpha H$ with $\alpha\in\mathbb{R}$.
Define scalar variance $\mathrm{Var}(Z):=\mathbb{E}\|Z-\mathbb{E}Z\|_2^2$ and covariance
$\mathrm{Cov}(G,H):=\mathbb{E}\langle G-\mathbb{E}G,\, H-\mathbb{E}H\rangle$.
Then the minimizer is
\begin{equation}
\label{eq:alpha_star}
\alpha^\star = \frac{\mathrm{Cov}(G,H)}{\mathrm{Var}(H)},
\end{equation}
and the minimum variance equals
\begin{equation}
\label{eq:cv_var}
\mathrm{Var}(G-\alpha^\star H)=\mathrm{Var}(G)\,(1-\rho^2),\qquad
\rho:=\frac{\mathrm{Cov}(G,H)}{\sqrt{\mathrm{Var}(G)\mathrm{Var}(H)}}.
\end{equation}
\end{lemma}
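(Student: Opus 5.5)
The plan is to reduce the question to minimizing a scalar quadratic in $\alpha$. First I center the variables: set $\bar G := G-\mathbb{E}G$ and $\bar H := H-\mathbb{E}H$. Square-integrability makes all the quantities below finite, and Cauchy--Schwarz bounds the cross term. By linearity of expectation, $\widehat G_\alpha-\mathbb{E}\widehat G_\alpha=\bar G-\alpha\bar H$. Expanding the squared Euclidean norm and taking expectations then gives
\[
f(\alpha):=\mathrm{Var}(G-\alpha H)=\mathrm{Var}(G)-2\alpha\,\mathrm{Cov}(G,H)+\alpha^2\,\mathrm{Var}(H).
\]
This holds for every $\alpha\in\mathbb{R}$, with $\mathrm{Var}$ and $\mathrm{Cov}$ as defined in the statement (trace-type scalar quantities).

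Next, assume $\mathrm{Var}(H)>0$, which is needed for \eqref{eq:alpha_star} to make sense. Then $f$ is a strictly convex parabola in $\alpha$. Setting $f'(\alpha)=-2\,\mathrm{Cov}(G,H)+2\alpha\,\mathrm{Var}(H)=0$ gives the unique minimizer $\alpha^\star=\mathrm{Cov}(G,H)/\mathrm{Var}(H)$. Alternatively, completing the square gives
\[
f(\alpha)=\mathrm{Var}(H)\,(\alpha-\alpha^\star)^2+\mathrm{Var}(G)-\frac{\mathrm{Cov}(G,H)^2}{\mathrm{Var}(H)},
\]
which shows both the minimizer and its uniqueness in one step.

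Then I substitute to get the minimum value $\mathrm{Var}(G)-\mathrm{Cov}(G,H)^2/\mathrm{Var}(H)$. If $\mathrm{Var}(G)>0$, I factor it out to obtain $\mathrm{Var}(G)(1-\rho^2)$ with $\rho$ as in \eqref{eq:cv_var}. If $\mathrm{Var}(G)=0$, then Cauchy--Schwarz forces $\mathrm{Cov}(G,H)=0$; both sides are zero, with $\rho$ interpreted as $0$.

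As a final remark, Cauchy--Schwarz in $L^2$ applied to $\langle \bar G,\bar H\rangle$ gives $|\mathrm{Cov}(G,H)|\le\sqrt{\mathrm{Var}(G)\mathrm{Var}(H)}$. Hence $\rho\in[-1,1]$, and the reduction is strict exactly when $\rho\neq0$. There is no real obstacle in this argument. The only care points are the degenerate cases $\mathrm{Var}(H)=0$, where any $\alpha$ is optimal and the formula should be read with that convention, and $\mathrm{Var}(G)=0$. I also need to note that for vector-valued $G,H$ this is a single scalar $\alpha$ acting on the summed variance, not a matrix control variate.
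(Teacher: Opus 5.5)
Your proposal is correct and follows the paper's own proof: expand $\mathrm{Var}(G-\alpha H)$ as a quadratic in $\alpha$, minimize it to obtain $\alpha^\star$, and substitute back to get $\mathrm{Var}(G)(1-\rho^2)$. You also handle the degenerate cases $\mathrm{Var}(H)=0$ and $\mathrm{Var}(G)=0$ explicitly; the paper leaves these implicit, so your version is a bit more careful.
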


\begin{proof}
Expand
\[
\mathrm{Var}(G-\alpha H)=\mathrm{Var}(G)+\alpha^2\mathrm{Var}(H)-2\alpha\,\mathrm{Cov}(G,H),
\]
optimize the quadratic in $\alpha$ to get \eqref{eq:alpha_star}, then substitute and simplify to \eqref{eq:cv_var}.
\end{proof}

We now derive a sufficient condition for $\mathrm{Cov}(g(t,\xi),g(t',\xi))>0$ from smoothness + path coupling.

Assume:
\begin{align}
\label{eq:ass_path_lip}
&\exists L_\Phi<\infty:\quad \mathbb{E}\|x_t-x_s\|_2^2 \le L_\Phi^2 |t-s|^2,\quad \forall s,t\in[0,1],\\
\label{eq:ass_grad_lip}
&\exists L_g<\infty:\quad \mathbb{E}\|g(t,\xi)-g(s,\xi)\|_2^2 \le L_g^2 |t-s|^2,\quad \forall s,t\in[0,1],
\end{align}
where \eqref{eq:ass_grad_lip} is implied by local Lipschitzness of $v_\theta$ and bounded Jacobians on the path support.

\begin{lemma}[Correlation lower bound from Lipschitz continuity]
\label{lem:corr_from_lip}
Let $G:=g(t,\xi)$ and $H:=g(t',\xi)$ with $t'=\psi(t)$.
Assume $\mathbb{E}G=\mathbb{E}H$ (or work with centered versions) and $\mathrm{Var}(G)>0$.
Then
\begin{equation}
\label{eq:corr_lb_lip}
\mathrm{Corr}(G,H)
\;\ge\;
1 - \frac{\mathbb{E}\|G-H\|_2^2}{2\,\mathrm{Var}(G)}.
\end{equation}
Consequently, if $\mathbb{E}\|G-H\|_2^2 \le 2(1-\rho_0)\mathrm{Var}(G)$ for some $\rho_0\in(0,1)$, then
$\mathrm{Corr}(G,H)\ge \rho_0$.
\end{lemma}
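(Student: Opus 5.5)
The plan is to reduce everything to the polarization identity for the centered $L^2$ norm and then compare the resulting expression with the claimed bound. As the next paragraph shows, that comparison goes through only under an extra variance hypothesis, and without it the inequality is false.

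\textbf{Step 1 (exact expression for the correlation).} Write $a^2:=\mathrm{Var}(G)>0$, $b^2:=\mathrm{Var}(H)$, and $D:=\mathbb{E}\|G-H\|_2^2$. We use the hypothesis $\mathbb{E}G=\mathbb{E}H$, or replace $G,H$ by their centered versions, which changes neither $\mathrm{Cov}$ nor $\mathrm{Corr}$. Expanding $\|(G-\mathbb{E}G)-(H-\mathbb{E}H)\|_2^2$ then gives
\[
D = a^2 + b^2 - 2\,\mathrm{Cov}(G,H),
\qquad\text{hence}\qquad
\mathrm{Corr}(G,H) = \frac{a^2+b^2-D}{2ab},
\]
with the convention from Lemma~\ref{lem:cv}. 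This is the same quadratic expansion used in the proof of Lemma~\ref{lem:cv}. We also need $b>0$ for the correlation to be defined.

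\textbf{Step 2 (compare with the target bound).} A short computation gives
\[
\mathrm{Corr}(G,H) - \Bigl(1-\frac{D}{2a^2}\Bigr) = \frac{a(a-b)^2 + (b-a)D}{2a^2 b}.
\]
The inequality~\eqref{eq:corr_lb_lip} is therefore equivalent to $a(a-b)^2 + (b-a)D \ge 0$. This holds immediately whenever $b\ge a$, i.e.\ $\mathrm{Var}(H)\ge\mathrm{Var}(G)$.

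In particular it holds with equality when $\mathrm{Var}(H)=\mathrm{Var}(G)$, and that is the case I would actually argue for. The key observation is that $\xi$ is independent of $t$. So if the pairing rule preserves $\rho$, then $(t',\xi)$ has the same law as $(t,\xi)$, and hence $H=g(t',\xi)$ has the same law as $G$. Preserving $\rho$ means $t'=\psi(t)$ is again $\rho$-distributed, which holds for instance for the antithetic map $\psi(t)=1-t$ with $\rho$ uniform. Under this hypothesis both $\mathbb{E}G=\mathbb{E}H$ and $\mathrm{Var}(G)=\mathrm{Var}(H)$ hold automatically, and~\eqref{eq:corr_lb_lip} becomes the identity $\mathrm{Corr}=1-D/(2\,\mathrm{Var}(G))$. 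The ``consequently'' clause then follows by substituting $D\le 2(1-\rho_0)\mathrm{Var}(G)$.

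\textbf{Main obstacle.} The case $b<a$ is where the argument breaks down. There the bound requires $D\le a(a-b)$, but the triangle inequality only gives $(a-b)^2\le D\le (a+b)^2$. Concretely, take $a=1$, $b=\tfrac12$, $D=1$, which is realizable, e.g.\ by scalar Gaussians with the appropriate covariance. Then $\mathrm{Corr}=\tfrac14$ while the claimed lower bound is $\tfrac12$.

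So I would not try to prove~\eqref{eq:corr_lb_lip} in full generality. Instead I would state and prove it under the additional hypothesis $\mathrm{Var}(H)\ge\mathrm{Var}(G)$, noting that this covers every law-preserving pairing, including the fixed antithetic one. For a general (e.g.\ learned) $\psi$, the bound that Step 1 does support unconditionally is the symmetric form
\[
\mathrm{Corr}(G,H)\ge 1-\frac{D}{2ab},
\]
which follows from $a^2+b^2\ge 2ab$. This version is sufficient for the downstream use with Lemma~\ref{lem:cv}, provided $D$ is controlled relative to $ab$. Assumption~\eqref{eq:ass_grad_lip} supplies such control via $D\le L_g^2\,\mathbb{E}|t-\psi(t)|^2$.
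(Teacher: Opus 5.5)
Your analysis is correct, and it is more careful than the paper's own argument. The paper's proof is your Step~1 specialized to equal variances. It writes $\mathbb{E}\|G\|_2^2+\mathbb{E}\|H\|_2^2=2\,\mathrm{Var}(G)$, and it then passes from $\mathrm{Cov}(G,H)$ to $\mathrm{Corr}(G,H)$ by dividing by $\mathrm{Var}(G)$ instead of by $\sqrt{\mathrm{Var}(G)\mathrm{Var}(H)}$. Both steps silently use $\mathrm{Var}(H)=\mathrm{Var}(G)$, which is not among the stated hypotheses; only $\mathbb{E}G=\mathbb{E}H$ is assumed. Under that extra assumption \eqref{eq:corr_lb_lip} is an identity, as you note.

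Your counterexample checks out. Taking $\mathrm{Var}(G)=1$, $\mathrm{Var}(H)=\tfrac14$, $\mathrm{Cov}(G,H)=\tfrac18$ gives $\mathbb{E}\|G-H\|_2^2=1$ and $\mathrm{Corr}(G,H)=\tfrac14<\tfrac12$. It can be realized in the form $G=g(t,\xi)$, $H=g(\psi(t),\xi)$, because the lemma constrains neither $g$ nor $\psi$ beyond equal means. So the lemma is false as stated. Your restricted version with $\mathrm{Var}(H)\ge\mathrm{Var}(G)$, justified in the key case via $\psi_\#\rho=\rho$, is the right repair.

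Be aware of what this repair costs downstream. Part (ii) of Theorem~\ref{thm:anchor_dense} targets local pairings with $|t-\psi(t)|\le\Delta$. The naive shift $\psi(t)=\min(t+\Delta,1)$ and a generic learned $\phi$ do not preserve $\rho$. The antithetic map does preserve $\rho$, but it admits no $\Delta<1$. A local pairing that satisfies your hypothesis has to be chosen measure-preserving, e.g.\ reflecting $t$ within consecutive blocks of length $\Delta$.

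For general pairings you can do better than your symmetric bound $1-D/(2ab)$. Apply AM--GM to $(a^2-D)+b^2$ instead of to $a^2+b^2$. When $D\le a^2$ this gives
\[
\mathrm{Corr}(G,H)=\frac{(a^2-D)+b^2}{2ab}\ \ge\ \sqrt{1-\frac{D}{\mathrm{Var}(G)}}.
\]
This bound has several advantages:
\begin{itemize}
\item It involves only $\mathrm{Var}(G)$, so your caveat ``provided $D$ is controlled relative to $ab$'' disappears.
\item It is sharp, with equality exactly when $\mathrm{Var}(H)=\mathrm{Var}(G)-D$, i.e.\ when the centered $H$ is orthogonal to $G-H$.
\item It is weaker than the claimed bound, since $\sqrt{1-x}\le 1-x/2$, which is consistent with your counterexample.
\item It still yields $\rho>0$, and hence strict variance reduction in part (iii), whenever $L_g^2\Delta^2<\mathrm{Var}(G)$. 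This condition is a factor of $2$ stricter than the paper's.
\end{itemize}
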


\begin{proof}
For centered $G,H$,
\[
\mathbb{E}\|G-H\|_2^2=\mathbb{E}\|G\|_2^2+\mathbb{E}\|H\|_2^2-2\mathbb{E}\langle G,H\rangle
=2\mathrm{Var}(G)-2\mathrm{Cov}(G,H),
\]
hence $\mathrm{Cov}(G,H)=\mathrm{Var}(G)-\tfrac12\mathbb{E}\|G-H\|_2^2$ and dividing by $\mathrm{Var}(G)$ yields \eqref{eq:corr_lb_lip}.
\end{proof}

The following result consolidates the preceding components into a single statement:
population-level regularization, gradient correlation, and strict variance reduction
are shown to be consequences of the same temporal coupling mechanism.

\begin{theorem}
\label{thm:anchor_dense}
Let $v_\lambda\in\arg\min \mathcal{R}_\lambda(v)$.

The key requirement for variance reduction is that gradients at paired times are positively correlated.
We now show that this is not an ad-hoc assumption, but follows from smooth dependence on time
together with path coupling induced by sharing the same endpoints $(x_0,x_1)$.

Assume \eqref{eq:ass_path_lip}--\eqref{eq:ass_grad_lip} and that the pairing $\psi$ satisfies $|t-\psi(t)|\le \Delta$ a.s.\ for some $\Delta\in(0,1)$.
Then:
\begin{enumerate}
\item[\textup{(i)}] (\textbf{Population contraction in the TPC metric})
\[
\|v_\lambda\|_{\mathrm{TPC}}^2 \le \frac{\mathcal{R}(v^\star)}{\lambda}.
\]
\item[\textup{(ii)}] (\textbf{Quantitative correlation})
For $G=g(t,\xi)$, $H=g(\psi(t),\xi)$,
\[
\mathbb{E}\|G-H\|_2^2 \le L_g^2\,\mathbb{E}|t-\psi(t)|^2 \le L_g^2\Delta^2,
\]
hence by Lemma~\ref{lem:corr_from_lip},
\begin{equation}
\label{eq:rho0}
\mathrm{Corr}(G,H)\ge 1-\frac{L_g^2\Delta^2}{2\,\mathrm{Var}(G)} =:\rho_0,
\end{equation}
whenever the RHS is positive.
\item[\textup{(iii)}] (\textbf{Strict variance reduction})
With $\widehat g_\alpha := G-\alpha H$ and $\alpha^\star$ from Lemma~\ref{lem:cv},
\[
\mathrm{Var}(\widehat g_{\alpha^\star})=\mathrm{Var}(G)(1-\rho^2)\le \mathrm{Var}(G)(1-\rho_0^2) < \mathrm{Var}(G),
\]
for any $\rho_0\in(0,1)$ satisfying \eqref{eq:rho0}.
\end{enumerate}
\end{theorem}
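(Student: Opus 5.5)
The plan is to prove the three parts in order, each as a short corollary of a lemma already established. The only genuinely new estimate is the mean-square bound in (ii), and there the assumption \eqref{eq:ass_grad_lip} does the work once it is applied conditionally on the sampled time.

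For (i), I invoke Lemma~\ref{lem:tikh} directly. Comparing the optimality of $v_\lambda$ with that of $v^\star$ gives $\mathcal{R}(v_\lambda)+\lambda\|v_\lambda\|_{\mathrm{TPC}}^2\le \mathcal{R}(v^\star)+\lambda\|v^\star\|_{\mathrm{TPC}}^2$. This is the inequality \eqref{eq:tikh_ineq}, which yields the claim after dividing by $\lambda>0$. I note one subtlety. The step "drop $\lambda\|v^\star\|^2_{\mathrm{TPC}}$" is only legitimate if $v^\star$ is chosen with $\|v^\star\|_{\mathrm{TPC}}=0$, or if the bound is read as $\|v_\lambda\|^2_{\mathrm{TPC}}\le \mathcal{R}(v^\star)/\lambda+\|v^\star\|^2_{\mathrm{TPC}}$. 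A clean fix is to compare instead against the zero predictor $v\equiv 0$ (or any time-constant $v$), which has zero TPC seminorm. That comparison gives $\lambda\|v_\lambda\|^2_{\mathrm{TPC}}\le \mathcal{R}(0)$, and the same structure carries over with $\mathcal{R}(v^\star)$ replaced by $\mathcal{R}(0)=\mathbb{E}\|u_t\|^2$. I will state (i) in whichever form the earlier lemma is taken to supply, and record this caveat.

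For (ii), I condition on $t$, so that $t'=\psi(t)$ is a fixed pair $(t,s)$. I then apply \eqref{eq:ass_grad_lip} with $s=\psi(t)$ to get $\mathbb{E}_\xi\|g(t,\xi)-g(\psi(t),\xi)\|^2\le L_g^2|t-\psi(t)|^2$. Taking expectation over $t\sim\rho$ gives $\mathbb{E}\|G-H\|^2\le L_g^2\,\mathbb{E}|t-\psi(t)|^2\le L_g^2\Delta^2$. This requires reading the assumption as holding pointwise in $(s,t)$ with $\xi$ independent of $t$, which is exactly the product structure $\pi\otimes\rho$. Substituting into Lemma~\ref{lem:corr_from_lip} (under its centering/equal-mean hypothesis) gives $\mathrm{Corr}(G,H)\ge 1-L_g^2\Delta^2/(2\mathrm{Var}(G))=\rho_0$.

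For (iii), Lemma~\ref{lem:cv} gives $\mathrm{Var}(G-\alpha^\star H)=\mathrm{Var}(G)(1-\rho^2)$. If $\rho_0\in(0,1)$, then $\rho\ge\rho_0>0$, so $\rho^2\ge\rho_0^2$ and therefore $1-\rho^2\le 1-\rho_0^2<1$. Since $\mathrm{Var}(G)>0$, the inequality is strict. This step needs $\mathrm{Var}(H)>0$ for $\alpha^\star$ to be defined, and I will add that assumption if it does not already follow.

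I expect the main obstacle to be the hypotheses rather than the algebra. These are the equal-means condition in Lemma~\ref{lem:corr_from_lip}, the need to interpret \eqref{eq:ass_grad_lip} conditionally on $t$, and the $v^\star$ caveat in (i). If the equal-means condition fails, I would work with centered $G,H$. Centering contracts the second moment of $G-H$, since $\mathbb{E}\|\tilde G-\tilde H\|^2\le\mathbb{E}\|G-H\|^2$, so the bound in (ii) survives. Lemma~\ref{lem:corr_from_lip} uses only $\mathrm{Var}(G)$ in the denominator and implicitly takes $\mathrm{Var}(H)=\mathrm{Var}(G)$. If that fails, I would instead bound the correlation via $\mathrm{Cov}=\tfrac12(\mathrm{Var}G+\mathrm{Var}H-\mathbb{E}\|\tilde G-\tilde H\|^2)$ together with the AM–GM inequality $\sqrt{\mathrm{Var}G\,\mathrm{Var}H}\le\tfrac12(\mathrm{Var}G+\mathrm{Var}H)$.
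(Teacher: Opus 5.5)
Your plan is exactly the paper's proof. The paper proves the theorem by citing Lemma~\ref{lem:tikh} for (i), \eqref{eq:ass_grad_lip} with $|t-\psi(t)|\le\Delta$ for (ii), and Lemma~\ref{lem:cv} with \eqref{eq:rho0} for (iii). Your conditioning on $t$ under $\pi\otimes\rho$ is the right way to justify the first display of (ii).

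The hypotheses you flag are not gaps in your reasoning. They are genuine errors in the paper's supporting lemmas, and they break the statement as written. So do not fall back on ``whichever form the earlier lemma is taken to supply.''

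\textbf{Part (i).} Since $v^\star$ minimizes $\mathcal{R}$, the middle term $\mathcal{R}(v^\star)-\mathcal{R}(v_\lambda)$ of \eqref{eq:tikh_ineq} is $\le 0$, so that chain would force $\|v_\lambda\|_{\mathrm{TPC}}=0$. Dropping $\lambda\|v^\star\|_{\mathrm{TPC}}^2$ from the larger side is invalid, and the bound itself is false in general:
\begin{itemize}
\item Take $p_0=\delta_0$ and $\Phi_t=(1-t^2)x_0+t^2x_1$. Then $u_t=2x_t/t$, so $\mathcal{R}(v^\star)=0$, while $\|v^\star\|_{\mathrm{TPC}}^2=\tfrac13\mathbb{E}\|x_1\|_2^2>0$ for $\psi(t)=t/2$ (which satisfies the $\Delta$-condition with $\Delta=\tfrac12$).
\item For any $v_0$ with $\|v_0\|_{\mathrm{TPC}}=0$ (necessarily $v_0\neq v^\star$), $\mathcal{R}_\lambda\bigl(v_0+s(v^\star-v_0)\bigr)=(1-s)^2\|v_0-v^\star\|_{\mathcal{H}}^2+\lambda s^2\|v^\star\|_{\mathrm{TPC}}^2$.
\item This strictly decreases for small $s>0$, so no minimizer $v_\lambda$ has zero seminorm, contradicting (i).
\end{itemize}
What is provable is:
\begin{itemize}
\item $\|v_\lambda\|_{\mathrm{TPC}}\le\|v^\star\|_{\mathrm{TPC}}$, by comparing with $v^\star$ and using $\mathcal{R}(v^\star)\le\mathcal{R}(v_\lambda)$;
\item $\lambda\|v_\lambda\|_{\mathrm{TPC}}^2\le\mathcal{R}(0)-\mathcal{R}(v^\star)=\mathbb{E}\|v^\star(x_t,t)\|_2^2$, which is your comparison with $v\equiv0$, sharpened by Lemma~\ref{lem:proj}.
\end{itemize}
One slip in your fix: a $t$-independent predictor $v(x,t)=w(x)$ does not have zero TPC seminorm, because $x_t\neq x_{t'}$. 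You need $v$ constant, or at least equal on each coupled pair.

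\textbf{Part (ii).} The equal-variance issue is equally real. The proof of Lemma~\ref{lem:corr_from_lip} replaces $\mathbb{E}\|H\|_2^2$ by $\mathrm{Var}(G)$, and divides $\mathrm{Cov}(G,H)$ by $\mathrm{Var}(G)$ instead of $\sqrt{\mathrm{Var}(G)\mathrm{Var}(H)}$; equal means do not rescue this. A concrete counterexample:
\begin{itemize}
\item Take scalar $g(t,\xi)=(t-\tfrac12)\xi$ with $\xi\sim\mathcal{N}(0,1)$, so $L_g=1$, and set $\Delta=0.1$.
\item Let $\psi(t)=t-\Delta$ for $t\ge\Delta$ and $\psi(t)=t+\Delta$ otherwise.
\item Both means vanish, $\mathrm{Var}(G)=\tfrac1{12}$, $\mathrm{Var}(H)\approx0.0753$, and $\mathbb{E}|G-H|^2=\Delta^2$.
\item Yet $\mathrm{Corr}(G,H)\approx0.938<0.94=\rho_0$, so (ii) and the middle inequality of (iii) both fail.
\end{itemize}
Your AM--GM repair is correct. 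It gives $\mathrm{Corr}(G,H)\ge1-L_g^2\Delta^2/(\mathrm{Var}(G)+\mathrm{Var}(H))$ whenever the right side is nonnegative, since then $\mathrm{Cov}(G,H)\ge0$. This dominates the paper's $\rho_0$ whenever $\mathrm{Var}(H)\ge\mathrm{Var}(G)$, for example when $\psi$ preserves $\rho$ so that $H\stackrel{d}{=}G$, but it is weaker otherwise. With this modified constant, your argument for (iii) goes through unchanged.
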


\begin{proof}
(i) is Lemma~\ref{lem:tikh}. (ii) uses \eqref{eq:ass_grad_lip} and $|t-\psi(t)|\le \Delta$.
(iii) is Lemma~\ref{lem:cv} with the lower bound \eqref{eq:rho0}.
\end{proof}

The final step is to connect temporal regularity of the learned vector field
to numerical behavior during probability--flow sampling.
Since generation integrates the learned ODE only along states visited by the sampler,
trajectory-wise temporal coherence is the relevant notion.

\subsection{ODE link: TPC as control of temporal roughness and solver error}

Let $z_t$ solve the probability--flow ODE
\begin{equation}
\label{eq:ode}
\dot z_t = v_\theta(z_t,t),\qquad z_0\sim p_0,
\end{equation}
and let $\widehat z_t$ be the explicit Euler discretization with step size $h=1/N$:
\[
\widehat z_{k+1}=\widehat z_k + h\,v_\theta(\widehat z_k, t_k),\qquad t_k=kh.
\]
Assume $v_\theta$ is Lipschitz in $x$ with constant $L_x$ and differentiable in $t$ with $\|\partial_t v_\theta(x,t)\|_2\le L_t$
on the region visited by $(z_t,\widehat z_t)$.

\begin{lemma}[Global error bound with explicit time-variation term]
\label{lem:euler}
There exists a constant $C$ (depending on bounds on $\|v_\theta\|$ and $\|\nabla_x v_\theta\|$ on the trajectory tube) such that
\begin{equation}
\label{eq:euler_err}
\|z_1-\widehat z_1\|_2 \le e^{L_x}\,C\,h\,(1+L_t).
\end{equation}
\end{lemma}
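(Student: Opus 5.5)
We prove Lemma~\ref{lem:euler} with the standard local-truncation plus discrete Gronwall argument for explicit Euler, keeping the $\partial_t v_\theta$ contribution as a separate term. Write $e_k := z_{t_k}-\widehat z_k$, with $e_0=0$ since both start from the same $z_0$. Subtracting the Euler recursion from the exact flow over $[t_k,t_{k+1}]$ gives
\[
e_{k+1} = e_k + h\bigl(v_\theta(z_{t_k},t_k)-v_\theta(\widehat z_k,t_k)\bigr) + \tau_k,
\qquad
\tau_k := \int_{t_k}^{t_{k+1}} \bigl(v_\theta(z_s,s)-v_\theta(z_{t_k},t_k)\bigr)\,ds .
\]
The Lipschitz assumption in $x$ bounds the middle term by $hL_x\|e_k\|_2$. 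Hence
\[
\|e_{k+1}\|_2\le (1+hL_x)\|e_k\|_2+\|\tau_k\|_2 .
\]

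\textbf{Step 1: local error.} For $s\in[t_k,t_{k+1}]$, split the increment of $v_\theta$ along the trajectory into a spatial part and a temporal part:
\[
v_\theta(z_s,s)-v_\theta(z_{t_k},t_k)=\bigl[v_\theta(z_s,s)-v_\theta(z_{t_k},s)\bigr]+\bigl[v_\theta(z_{t_k},s)-v_\theta(z_{t_k},t_k)\bigr].
\]
\begin{itemize}
\item The first bracket is at most $L_x\|z_s-z_{t_k}\|_2\le L_xM(s-t_k)$, where $M:=\sup\|v_\theta\|_2$ on the tube.
\item The second bracket is at most $L_t(s-t_k)$ by the mean value theorem in $t$.
\end{itemize}
Integrating over the step gives
\[
\|\tau_k\|_2\le \tfrac{h^2}{2}\,(L_xM+L_t)\le C_0\,h^2(1+L_t),
\qquad C_0:=\max\{\tfrac12 L_xM,\tfrac12\}.
\]
This is where the explicit time-variation term enters. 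The constant depends only on the bounds for $\|v_\theta\|$ and $\|\nabla_x v_\theta\|$, as the statement allows.

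\textbf{Step 2: accumulation.} Unroll the recursion and use $(1+hL_x)^{N}\le e^{L_x}$ with $Nh=1$:
\[
\|e_N\|_2\le \sum_{k=0}^{N-1}(1+hL_x)^{N-1-k}\|\tau_k\|_2\le e^{L_x}\,N\,C_0h^2(1+L_t)=e^{L_x}C_0\,h\,(1+L_t).
\]
This is \eqref{eq:euler_err} with $C=C_0$.

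\textbf{Main obstacle.} The delicate point is justifying that the Lipschitz and derivative bounds may be applied to the pair $(z_{t_k},\widehat z_k)$. The hypotheses hold only on the region visited by both trajectories, so the argument should invoke the assumption as stated: the bounds hold on a tube containing both paths, and the convex segments between them are included in that tube.

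If one insists on deriving this rather than assuming it, the plan is a bootstrap. Assume the error stays below the tube radius up to step $k$; then the bound just proved shows it stays below that radius at step $k+1$ for $h$ small enough. This adds only a smallness condition on $h$ and leaves the form of the constant unchanged.
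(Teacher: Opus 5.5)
Your proposal is correct and takes essentially the same route as the paper's sketch. You write the one-step defect as $\int_{t_k}^{t_{k+1}}\bigl[v_\theta(z_s,s)-v_\theta(z_{t_k},t_k)\bigr]\,ds$, bound the integrand by $L_x\|z_s-z_{t_k}\|_2+L_t|s-t_k|$, and accumulate with a discrete Gr\"onwall argument, making explicit the details the sketch leaves implicit: the bound $\|z_s-z_{t_k}\|_2\le M(s-t_k)$, the constant $C=\max\{\tfrac12 L_xM,\tfrac12\}$, and the factor $(1+hL_x)^N\le e^{L_x}$.
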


\begin{proof}[Proof sketch]
Write the local truncation error $z_{t_{k+1}}-z_{t_k}-h v_\theta(z_{t_k},t_k)$
and use Taylor expansion in time:
\[
z_{t_{k+1}}=z_{t_k}+\int_{t_k}^{t_{k+1}} v_\theta(z_s,s)\,ds
=z_{t_k}+h v_\theta(z_{t_k},t_k)+\int_{t_k}^{t_{k+1}}\!\!\!\big[v_\theta(z_s,s)-v_\theta(z_{t_k},t_k)\big]ds.
\]
Bound the integrand by Lipschitz in $x$ and $t$:
\[
\|v_\theta(z_s,s)-v_\theta(z_{t_k},t_k)\|\le L_x\|z_s-z_{t_k}\| + L_t|s-t_k|,
\]
and use Gr\"onwall to accumulate.
\end{proof}

Lemma~\ref{lem:euler} isolates temporal variation of the vector field as an explicit contributor to global discretization error.
Thus, any mechanism that suppresses temporal roughness \emph{along sampled trajectories}
directly improves numerical stability at fixed step size.

For \emph{local} pairings $\psi(t)=t+\Delta$ with small $\Delta$ (random or deterministic),
\[
v(x_{t+\Delta},t+\Delta)-v(x_t,t)
=
\Delta\,\partial_t v(x_t,t)+\Delta\,(\nabla_x v(x_t,t))\,\dot x_t + O(\Delta^2),
\]
hence (formally)
\begin{equation}
\label{eq:tpc_to_dt}
\frac{1}{\Delta^2}\,\mathbb{E}\|v(x_{t+\Delta},t+\Delta)-v(x_t,t)\|_2^2
\;\approx\;
\mathbb{E}\|\partial_t v(x_t,t) + (\nabla_x v(x_t,t))\,\dot x_t\|_2^2.
\end{equation}
Thus small $\|v\|_{\mathrm{TPC}}$ (for local $\psi$) yields a small \emph{trajectory-averaged} temporal derivative, which is exactly the quantity entering time-variation terms in discretization bounds such as \eqref{eq:euler_err}.
For global pairings (e.g.\ antithetic $\psi(t)=1-t$), $\|v\|_{\mathrm{TPC}}$ still controls low-frequency temporal oscillations in a spectral sense (via the quadratic form induced by $\mathcal{A}_\psi$),
which again reduces effective temporal roughness on the sampler support.

\subsection{Uniform convergence under TPC constraint}

Let $\mathcal{V}$ be a class of predictors bounded by $\|v(x,t)\|_2\le B$ a.s.
Define the constrained class $\mathcal{V}_\tau:=\{v\in\mathcal{V}:\|v\|_{\mathrm{TPC}}^2\le\tau\}$.
For i.i.d.\ samples $\{(x_0^i,x_1^i,t^i)\}_{i=1}^n$, define the empirical risk
\[
\widehat{\mathcal{R}}(v) := \frac1n\sum_{i=1}^n \|v(x_{t^i}^i,t^i)-u_{t^i}(x_0^i,x_1^i)\|_2^2.
\]

\begin{proposition}[Rademacher bound for constrained class]
\label{prop:rad}
With probability $\ge 1-\delta$,
\begin{equation}
\label{eq:rad_bound}
\sup_{v\in\mathcal{V}_\tau}\big|\widehat{\mathcal{R}}(v)-\mathcal{R}(v)\big|
\;\le\;
c_1\,\mathfrak{R}_n(\mathcal{V}_\tau) + c_2\,B^2\sqrt{\frac{\log(1/\delta)}{n}},
\end{equation}
for universal constants $c_1,c_2>0$ and vector-valued Rademacher complexity $\mathfrak{R}_n(\cdot)$.
In typical parameterizations, $\mathfrak{R}_n(\mathcal{V}_\tau)$ is non-increasing in $\tau$.
\end{proposition}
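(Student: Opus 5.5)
The bound follows the standard symmetrization argument for uniform convergence of a bounded loss class. It is applied to the loss class induced by $\mathcal{V}_\tau$,
\[
\mathcal{F}_\tau:=\{(x_0,x_1,t)\mapsto \|v(\Phi_t(x_0,x_1),t)-u_t(x_0,x_1)\|_2^2 : v\in\mathcal{V}_\tau\}.
\]
The TPC constraint plays no role in the concentration step itself. It enters only through the complexity term, since $\mathcal{V}_\tau\subseteq\mathcal{V}_{\tau'}$ for $\tau\le\tau'$. This inclusion gives the monotonicity remark, in the sense that the Rademacher complexity is monotone under set inclusion.

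First, bound the loss range. We need an additional boundedness assumption on the target, $\|u_t\|_2\le B$ a.s. This is implicit in the statement, and otherwise $B$ would be replaced by $B+\|u\|_\infty$. Under it, every $f\in\mathcal{F}_\tau$ takes values in $[0,4B^2]$. The supremum $Z:=\sup_{v\in\mathcal{V}_\tau}|\widehat{\mathcal{R}}(v)-\mathcal{R}(v)|$ then has bounded differences $4B^2/n$ in each i.i.d.\ sample $(x_0^i,x_1^i,t^i)$. McDiarmid's inequality gives, with probability $\ge 1-\delta$,
\[
Z\le \mathbb{E}Z + 4B^2\sqrt{\log(1/\delta)/(2n)},
\]
which is the $c_2B^2\sqrt{\log(1/\delta)/n}$ term. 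To handle the absolute value, apply the argument to both one-sided suprema and absorb the resulting factor $2$ in $\delta$ into $c_2$.

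Second, symmetrize with a ghost sample and Rademacher signs to get $\mathbb{E}Z\le 2\,\mathfrak{R}_n(\mathcal{F}_\tau)$.

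Third, pass from $\mathcal{F}_\tau$ back to $\mathcal{V}_\tau$. This is the main obstacle, because the map $y\mapsto\|y-u\|_2^2$ is vector-valued. On the ball $\|y\|\le B$ it is $4B$-Lipschitz in $y$, uniformly in $u$. I would invoke the vector-valued contraction inequality of Maurer (2016):
\[
\mathbb{E}\sup_f\sum_i\epsilon_i h_i(f(z_i))\le\sqrt2 L\,\mathbb{E}\sup_f\sum_{i,k}\epsilon_{ik}f_k(z_i).
\]
This yields $\mathfrak{R}_n(\mathcal{F}_\tau)\le 4\sqrt2 B\,\mathfrak{R}_n(\mathcal{V}_\tau)$, where $\mathfrak{R}_n(\mathcal{V}_\tau)$ is understood as the vector-valued (doubly indexed Rademacher) complexity. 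The factor $B$ is either absorbed into $c_1$ by normalizing $B\le 1$, or stated as $c_1 B\,\mathfrak{R}_n$. Since the statement only asserts universal constants, I would make this scaling explicit.

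Combining the three steps gives the bound in \eqref{eq:rad_bound}.
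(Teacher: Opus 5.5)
The paper states Proposition~\ref{prop:rad} without any proof, so there is no argument of the authors' to compare against. Your McDiarmid--symmetrization--vector-contraction route is the standard one, and it is correct once your two amendments are made. Both amendments are genuinely needed rather than cosmetic.

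\textbf{Bounded targets.} The paper's own linear path has $u_t=x_1-x_0$ with Gaussian $x_0$, which is unbounded. Even with bounded targets, the class $\mathcal{V}=\{0\}$ (bounded by every $B>0$, with $\mathfrak{R}_n=0$) makes the left side independent of $B$. The right side then vanishes as $B\to 0$, so the bound fails unless the targets are bounded by a constant multiple of $B$.

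\textbf{The factor $B$.} The left-hand side is quadratic in the common scale of $(v,u)$, while $\mathfrak{R}_n(\mathcal{V}_\tau)$ is linear. So a scale-free $c_1$ cannot hold in general. Your explicit constant $8\sqrt{2}\,B$ (a factor $2$ from symmetrization, $\sqrt{2}\cdot 4B$ from Maurer's inequality), or a normalization $B\le 1$, is the right fix.

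Two points need correcting.

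\textbf{Monotonicity remark.} Your inclusion $\mathcal{V}_\tau\subseteq\mathcal{V}_{\tau'}$ for $\tau\le\tau'$ yields $\mathfrak{R}_n(\mathcal{V}_\tau)\le\mathfrak{R}_n(\mathcal{V}_{\tau'})$. That makes $\mathfrak{R}_n(\mathcal{V}_\tau)$ non-\emph{decreasing} in $\tau$, the opposite of what the proposition literally asserts. Because the inclusion holds for every class, the literal ``non-increasing'' claim could only hold if the complexity were constant in $\tau$. What is true, and presumably intended, is that tightening the TPC budget cannot increase complexity. This holds unconditionally, not just ``in typical parameterizations''. You should say that the direction in the proposition is reversed, rather than claim the inclusion gives the remark as written.

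\textbf{The range of $\delta$.} Symmetrization into $\mathfrak{R}_n$ without absolute values and Maurer's contraction are both one-sided statements. For a singleton class that complexity is $0$ while $\mathbb{E}Z>0$. So the argument must be run on $\sup_v(\widehat{\mathcal{R}}-\mathcal{R})$ and $\sup_v(\mathcal{R}-\widehat{\mathcal{R}})$ separately, as you indicate, rather than giving $\mathbb{E}Z\le 2\mathfrak{R}_n(\mathcal{F}_\tau)$ for the two-sided $Z$. The union bound then leaves $\log(2/\delta)$, which is at most $2\log(1/\delta)$ only for $\delta\le 1/2$. For $\delta$ close to $1$ the displayed bound can fail even for a singleton class. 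For example, take losses equal to $0$ or $4B^2$ with probability $1/2$ each and $n$ odd; then $Z\ge 2B^2/n$ surely. So this restriction on $\delta$ has to be added to the statement; it cannot be absorbed into $c_2$.
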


\subsection{Limitations}

No statement above directly implies monotone improvement in FID/IS:
\[
\mathrm{FID} = \mathrm{FID}\big(\text{model bias},\ \text{optimization error},\ \text{solver error},\ \text{estimation noise}\big),
\]
and our results certify only the mechanism-level reductions:
\[
\text{TPC} \Rightarrow \downarrow\,\|v\|_{\mathrm{TPC}} \Rightarrow
\begin{cases}
\uparrow\,\mathrm{Corr}(g(t,\xi),g(t',\xi)) \Rightarrow \downarrow\,\mathrm{Var}(\text{SGD gradients}),\\
\downarrow\,\text{temporal roughness on sampler support} \Rightarrow \downarrow\,\text{ODE discretization error}.
\end{cases}
\]

\subsection{Takeaway}

Under the stated assumptions,
\[
v_\lambda\in\arg\min \mathcal{R}_\lambda
\quad\Rightarrow\quad
\|v_\lambda\|_{\mathrm{TPC}}^2 \le \frac{\mathcal{R}(v^\star)}{\lambda},
\]
and for paired-time gradients $G=g(t,\xi)$, $H=g(\psi(t),\xi)$,
\[
\mathrm{Var}(G-\alpha^\star H)=\mathrm{Var}(G)(1-\rho^2) < \mathrm{Var}(G)
\quad\text{whenever}\quad \rho>0,
\]
with $\rho$ quantitatively lower bounded via Lemma~\ref{lem:corr_from_lip} under path-coupled Lipschitz regularity.
Moreover, for sampling $\dot z_t=v_\theta(z_t,t)$,
Euler-type global error contains explicit dependence on temporal variation (Lemma~\ref{lem:euler}),
which is controlled in a trajectory-averaged sense by local TPC penalties through \eqref{eq:tpc_to_dt}.
\section{Generated Samples}
\label{app:samples}

\begin{table}[!t]
\centering
\caption{Comparison of GAN, ODE, and SDE generative models on CIFAR-10 under one-step and full-simulation settings.}
\label{tab:rf_context}
\begin{tabular}{lcccc}
\toprule
\textbf{Method} 
& \textbf{NFE (\(\downarrow\))} 
& \textbf{IS (\(\uparrow\))} 
& \textbf{FID (\(\downarrow\))} 
& \textbf{Recall (\(\uparrow\))} \\
\midrule

\multicolumn{5}{l}{\textbf{GAN}} \\
\midrule
SNGAN~\cite{miyato2018spectral} & 1 & 8.22 & 21.7 & 0.44 \\
StyleGAN2~\cite{NEURIPS2020_8d30aa96} & 1 & 9.18 & 8.32 & 0.41 \\
StyleGAN-XL~\cite{10.1145/3528233.3530738} & 1 & - & 1.85 & 0.47 \\
StyleGAN2 + ADA~\cite{NEURIPS2020_8d30aa96} & 1 & 9.40 & 2.92 & 0.49 \\
StyleGAN2 + DiffAug~\cite{NEURIPS2020_55479c55} & 1 & 9.40 & 5.79 & 0.42 \\
TransGAN + DiffAug~\cite{NEURIPS2021_7c220a20} & 1 & 9.02 & 9.26 & 0.41 \\

\midrule
\multicolumn{5}{l}{\textbf{GAN with U-Net}} \\
\midrule
TDPM (T=1)~\cite{zheng2023truncated} & 1 & 8.65 & 8.91 & 0.46 \\
Denoise. Diff. GAN (T=1)~\cite{xiao2022tacklinggenerativelearningtrilemma} & 1 & 8.93 & 14.6 & 0.19 \\

\midrule
\multicolumn{5}{l}{\textbf{ODE} \quad \textit{One-Step Generation (Euler solver, N=1)}} \\
\midrule
DDIM Distillation~\cite{luhman2021knowledgedistillationiterativegenerative} & 1 & 8.36 & 9.36 & 0.51 \\
NCSN++ (VE ODE; Distill)~\cite{song2021scorebased} & 1 & 2.57 & 2.54 & 0.0 \\
Progressive~\cite{salimans2022progressive} & 1 & - & 9.12 & - \\
DDIM~\cite{song2021denoising} & 1 & - & $> 20$ & - \\

\midrule
\multicolumn{5}{l}{\textbf{ODE} \quad \textit{Full Simulation (Runge--Kutta RK45), Adaptive N}} \\
\midrule
NCSN++ (VE ODE)~\cite{song2021scorebased} & 176 & 9.35 & 5.38 & 0.56 \\

\midrule
\multicolumn{5}{l}{\textbf{SDE} \quad \textit{Full Simulation (Euler solver)}} \\
\midrule
DDPM~\cite{NEURIPS2020_4c5bcfec} & 1000 & 9.46 & 3.21 & 0.57 \\
NCSN++ (VE SDE)~\cite{song2021scorebased} & 2000 & 9.83 & 2.38 & 0.59 \\

\midrule
\multicolumn{5}{l}{\textbf{ODE} \quad \textit{Full Simulation (Euler solver)}} \\
\midrule
DDIM~\cite{song2021denoising} & 10 & - & 13.36 & - \\
DDIM~\cite{song2021denoising} & 100 & - & 4.16 & - \\

\bottomrule
\end{tabular}

\end{table}

% --- Figure 1: Qualitative samples (CIFAR-10, ImageNet-32, ImageNet-64) ---
\begin{figure}[!t]
    \centering
    \begin{subfigure}{0.32\columnwidth}
        \centering
        \includegraphics[width=\linewidth]{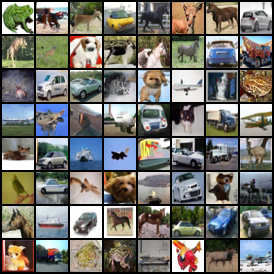}
        \caption{CIFAR-10}
        \label{fig:cifar10_qual}
    \end{subfigure}\hfill
    \begin{subfigure}{0.32\columnwidth}
        \centering
        \includegraphics[width=\linewidth]{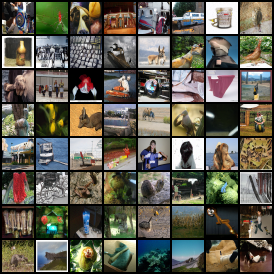}
        \caption{ImageNet-32}
        \label{fig:imagenet32_qual}
    \end{subfigure}\hfill
    \begin{subfigure}{0.32\columnwidth}
        \centering
        \includegraphics[width=\linewidth]{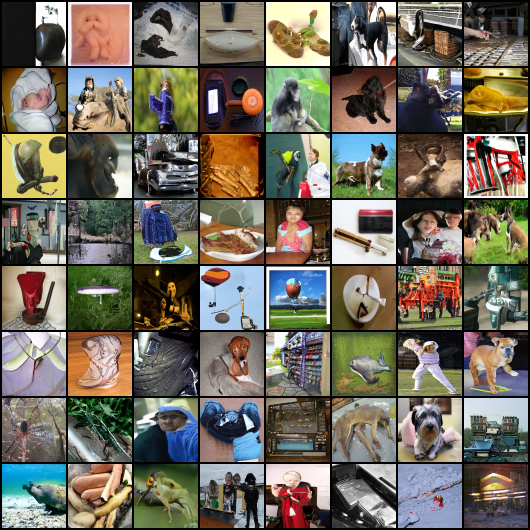}
        \caption{ImageNet-64}
        \label{fig:imagenet64_qual}
    \end{subfigure}

    \caption{
    \textbf{Qualitative samples.}
    Unconditional generations on CIFAR-10, ImageNet-32, and ImageNet-64.
    }
    \label{fig:qual_samples}
\end{figure}

% --- Figure 2: Training variance ---
\begin{figure}[!t]
    \centering
    \includegraphics[width=0.78\columnwidth]{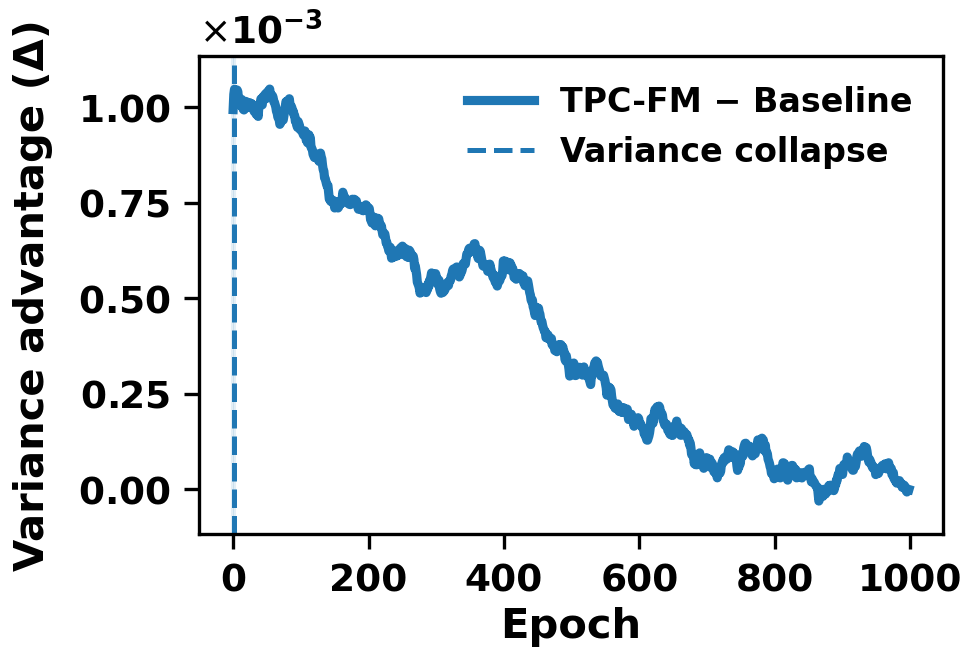}
    \caption{
    \textbf{Training variance behavior.}
    \textbf{TPC-FM shows an early variance collapse and sustained stability} throughout training.
    }
    \label{fig:variance_adv}
\end{figure}

\end{document}